\def\eqref#1{equation~\ref{#1}}
\def\1{\bm{1}}
\def\rva{{\mathbf{a}}}
\def\rvg{{\mathbf{g}}}
\def\rvh{{\mathbf{h}}}
\def\rvm{{\mathbf{m}}}
\def\rvx{{\mathbf{x}}}
\def\rvy{{\mathbf{y}}}
\def\rvz{{\mathbf{z}}}
\def\rmI{{\mathbf{I}}}
\DeclareMathAlphabet{\mathsfit}{\encodingdefault}{\sfdefault}{m}{sl}
\SetMathAlphabet{\mathsfit}{bold}{\encodingdefault}{\sfdefault}{bx}{n}
\def\gA{{\mathcal{A}}}
\def\gB{{\mathcal{B}}}
\def\gD{{\mathcal{D}}}
\def\gF{{\mathcal{F}}}
\def\gG{{\mathcal{G}}}
\def\gI{{\mathcal{I}}}
\def\gM{{\mathcal{M}}}
\def\gN{{\mathcal{N}}}
\def\gS{{\mathcal{S}}}
\def\gT{{\mathcal{T}}}
\def\gX{{\mathcal{X}}}
\def\gY{{\mathcal{Y}}}
\newcommand{\E}{\mathbb{E}}
\newcommand{\Ls}{\mathcal{L}}
\newcommand{\R}{\mathbb{R}}
\newcommand{\KL}{D_{\mathrm{KL}}}
\newtheorem{theorem}{Theorem}
\newtheorem{proposition}[theorem]{Proposition}
\theoremstyle{definition}
\theoremstyle{remark}
\newcommand{\rev}[1]{{\color{black}#1}}
\newcommand{\rrev}[1]{{\color{black}#1}}
\title{Revisit and Outstrip Entity Alignment:\\ A Perspective of Generative Models}
\author{Lingbing Guo$^{1,2,3}$\footnotemark[1], Zhuo Chen$^{1,2,3}$\thanks{Equal contribution}, Jiaoyan Chen$^{4}$, Yin Fang$^{1,2,3}$, Wen Zhang$^{5,2}$, \And  Huajun Chen$^{1,2,3}$\thanks{Correspondence to: huajunsir@zju.edu.cn}\\
	$^1$College of Computer Science and Technology, Zhejiang University \\
	$^2$Zhejiang University - Ant Group Joint Laboratory of Knowledge Graph \\
	$^3$Donghai Laboratory \\
	$^4$Department of Computer Science, The University of Manchester \\
        $^5$School of Software Technology, Zhejiang University\\
}
\begin{document}

\maketitle

\begin{abstract}
  Recent embedding-based methods have achieved great successes in exploiting entity alignment from knowledge graph (KG) embeddings of multiple modalities. In this paper, we study embedding-based entity alignment (EEA) from a perspective of generative models. We show that EEA shares similarities with typical generative models and prove the effectiveness of the recently developed generative adversarial network (GAN)-based EEA methods theoretically. We then reveal that their incomplete objective limits the capacity on both entity alignment and entity synthesis (i.e., generating new entities). We mitigate this problem by introducing a generative EEA (GEEA) framework with the proposed mutual variational autoencoder (M-VAE) as the generative model. M-VAE enables entity conversion between KGs and generation of new entities from random noise vectors. We demonstrate the power of GEEA with theoretical analysis and empirical experiments on both entity alignment and entity synthesis tasks.
\end{abstract}

\section{Introduction}
\label{sec:intro}
As one of the most prevalent tasks in the knowledge graph (KG) area, entity alignment (EA) has recently made great progress and developments with the support of the embedding techniques~\rrev{\citep{MTransE,JAPE,MultiKE,MMEA,EVA,MSNEA,Meaformer,RLEA,NeoEA,MCLEA}.} By encoding the relational and other information into \rrev{low-dimensional} vectors, the embedding-based entity alignment (EEA) methods are friendly \rrev{for} development and deployment, and have achieved state-of-the-art performance on many benchmarks.

The objective of EA is to maximize the conditional probability $p(y | x)$, where $x$, $y$ are a pair of aligned entities belonging to source KG $\gX$ and target KG $\gY$, respectively. If we view $x$ as the input and $y$ as the label (and vice versa), the problem can be solved by a discriminative model. To this end, we need an EEA model which comprises an encoder module and a fusion layer~\citep{MultiKE,MMEA,EVA,MSNEA,Meaformer,MCLEA} (see Figure~\ref{fig:example}). The encoder module uses different encoders to encode multi-modal information into low-dimensional embeddings. The fusion layer then combines these sub-embeddings to a joint embedding as the output. 

We also need a predictor, as shown in the yellow area in Figure~\ref{fig:example}. The predictor is usually independent \rrev{of} the EEA model and parameterized with neural layers~\citep{MTransE,decentRL} or based on the embedding distance~\citep{JAPE,BootEA}. In either case, it learns the probability $p(y|x)$ where $p(y|x) = 1$ if the two entities $x$, $y$ are aligned and $0$ otherwise. The difference lies primarily in data augmentation. The existing methods employ different strategies to construct more training data, e.g., negative sampling~\citep{MTransE,JAPE,GCN-Align} and bootstrapping~\citep{BootEA,SEA,RLEA}. \rrev{In this paper, we demonstrate that adopting a generative perspective in studying EEA allows us to interpret negative sampling algorithms and GAN-based methods~\citep{SEA,OTEA,NeoEA}. Furthermore, we provide theoretical proof that optimizing the generative objectives contributes to minimizing the EEA objective, thereby enhancing overall performance.}

In fact, entity alignment is not the ultimate aim of many applications. The results of entity alignment are used to enrich each other's KGs, but there are often entities in the source KG that do not have aligned counterparts in the target KG, known as \emph{dangling entities}~\citep{danglingentity1,danglingentity3,danglingentity2}. 
\rrev{For instance, a source entity \textit{Star Wars (film)} may not have a counterpart in the target KG, which means we cannot directly enrich the target KG with the information of \textit{Star Wars (film)} via entity alignment. However,
if we can convert entities like \textit{Star Wars (film)} from the source KG to the target KG, }
it would save a major expenditure of time and effort for many knowledge engineering tasks, such as knowledge integration and fact checking. Hence, we propose \emph{conditional entity synthesis} to generate new entities for the target KG with the entities in the source KG as input.
Additionally, generating new entities from random variables may contribute to the fields like Metaverse and video games where the design of virtual characters still relies on hand-crafted features and randomized algorithms~\citep{videogame,metaverse}. For example, modern video games feature a large number of non-player characters (NPCs) with unique backgrounds and relationships, which are essential for creating immersive virtual worlds. Designing NPCs is a laborious and complex process, and using the randomized algorithms often yields unrealistic results. By storing the information and relationships of NPCs in a KG, one can leverage even a small initial training KG to generate high-quality NPCs with coherent backgrounds and relationships. Therefore, we propose \emph{unconditional entity synthesis} for generating new entities with random noise vectors as input.

We propose a generative EEA (abbr., GEEA) framework with the mutual variational autoencoder (M-VAE) to encode/decode entities between source and target KGs. GEEA is capable of generating concrete features, such as the exact neighborhood or attribute information of a new entity, rather than only the inexplicable embeddings as previous works \rrev{have done}~\citep{SEA,OTEA,NeoEA}. We introduce the prior reconstruction and post reconstruction to control the generation process. Briefly, the prior reconstruction is used to generate specific features for each modality, while the post reconstruction ensures these different kinds of features belong to the same entity. We conduct experiments to validate the performance of GEEA, where it achieves state-of-the-art performance in entity alignment and generates high-quality new entities in entity synthesis.

\begin{figure}[t]
	\centering
	\includegraphics[width=.85\linewidth]{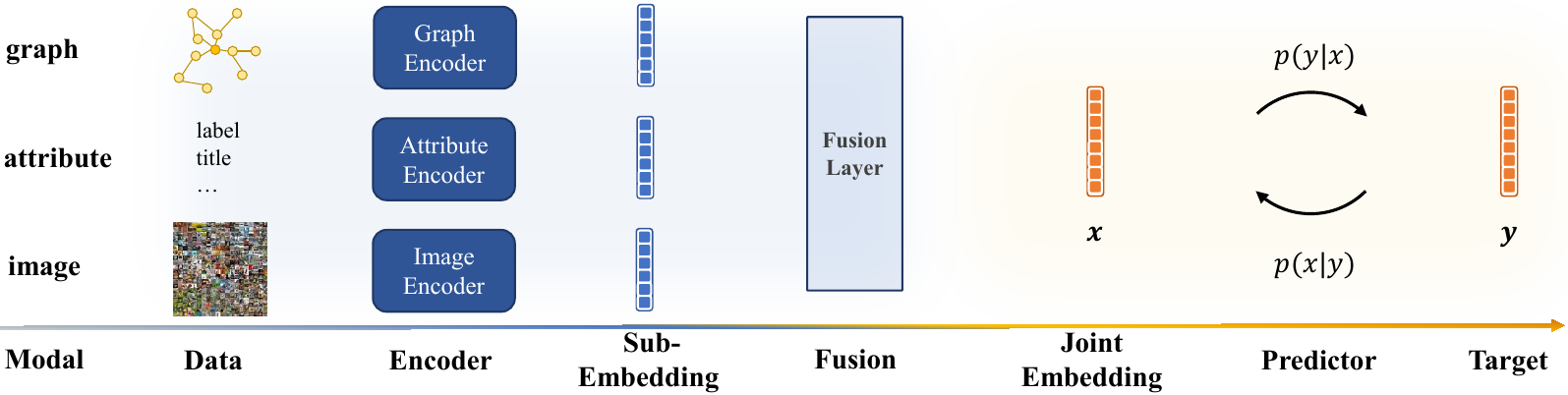}
	\caption{Illustration of embedding-based entity alignment. The modules in the blue area belong to the EEA model, while those in the yellow area belong to the predictor.}
	\label{fig:example}
	\vspace{-1.5em}
\end{figure}

\section{Revisit Embedding-based Entity Alignment}
\label{sec:background}

In this section, we revisit embedding-based entity alignment by a theoretical analysis of how the generative models contribute to entity alignment learning, and then discuss their limitations.

\subsection{Preliminaries}

\paragraph{Entity Alignment}

Entity alignment aims to find the implicitly aligned entity pairs $\{(x, y) | x\in \gX, y \in \gY \}$, where $\gX$, $\gY$ denote the source and target entity sets, and $(x, y)$ represents a pair of aligned entities referring to the same real-world object. An EEA model $\gM$ uses a small number of aligned entity pairs $\gS$ (a.k.a., seed alignment set) as training data to infer the remaining alignment pairs $\gT$ in the testing set. We consider three different modalities: relational graphs $\gG_x$, $\gG_y$, attributes $\gA_x$, $\gA_y$, and images $\gI_x$, $\gI_y$. Other types of information can be also given as features for $\gX$ and $\gY$. 

\rrev{For instance, the relational graph feature of an entity \textit{Star Wars (film)} is represented as triplets, such as \textit{(Star Wars (film), founded by, George Lucas)}. Similarly, the attribute feature is represented as attribute triplets, e.g., \textit{(Star Wars (film), title, Star Wars (English))}. For the image feature, we follow the existing multi-modal EEA works to use a constant pretrained embedding from a vision model as the image feature of \textit{Star Wars (film)}~\citep{EVA,MCLEA}.}
The EEA model $\gM$ takes the above multi-modal features $x = (g_x, a_x, i_x, ...)$ as input, where $g_x$, $a_x$, $i_x$ denote the relational graph information, attribute information and image information of $x$, respectively.  \rev{The output consists of the embeddings for each modality (i.e., sub-embeddings) and a final output embedding $\rvx$ (i.e., joint embedding) that combines all modalities:}
\begin{align}
    \rvx = \gM (x) &= \textit{Linear}(\textit{Concat}(\gM_g(g_x), \gM_a(a_x), \gM_i(i_x), ...))\\
    &= \textit{Linear}(\textit{Concat}(\rvg_x, \rva_x, \mathbf{i}_x, ...)),
\end{align}
where $\gM_g$, $\gM_a$, and $\gM_i$ denote the EEA encoders for different modalities (also see Figure~\ref{fig:example}). $\rvg_x$, $\rva_x$, and $\mathbf{i}_x$ denote the embeddings of different modalities. Similarly, we obtain $\rvy$ by $\rvy = \gM (y)$.

\paragraph{Entity Synthesis}
We consider two entity synthesis tasks: conditional entity synthesis and unconditional entity synthesis. Conditional entity synthesis aims to generate entities in the target KG with the dangling entities in the source KG as input. Formally, the model takes an entity $x$ as input and convert it to an entity $y_{x \rightarrow y}$ for the target KG. It should also produce the corresponding concrete features, such as neighborhood and attribute information specific to the target KG. On the other hand, the unconditional entity synthesis involves generating new entities in the target KG with random noise variables as input. Formally, the model takes a random noise vector $\rvz$ as input and generate a target entity embedding $\rvy_{z \rightarrow y}$ which is then converted back to concrete features. 

\rrev{For instance, to reconstruct the neighborhood (or attribute) information of \textit{Star Wars (film)} from its embedding, we can leverage a decoder module to convert the embedding into a probability distribution of all candidate entities (or attributes). As the image features are constant pretrained embeddings, we can use the image corresponding to the nearest neighbor of the reconstructed image embedding of \textit{Star Wars (film)} as the output image.}

\rrev{
\paragraph{Generative Models}
Generative models learn the underlying probability distribution $p(x)$ of the input data $x$. Take variational autoencoder (VAE)~\citep{ELBO} as an example,} the encoding and decoding processes can be defined as:
\begin{align}
    \rvh &= \text{Encoder}(\rvx) && \text{(Encoding)}\\
    \rvz &= \mathbf{\mu} + \mathbf{\sigma} \odot \mathbf{\epsilon} =  \text{Linear}_\mu(\rvh) + \text{Linear}_\sigma(\rvh) \odot\mathbf{\epsilon} && \text{(Reparameterization Trick)} \label{eq:repara}\\
    \rvx_{x \rightarrow x} &= \text{Decoder}(\rvz) && \text{(Decoding)},
\end{align}
where $\rvh$ is the hidden output. VAE uses the reparameterization trick to rewrite $\rvh$ as coefficients $\mu$, $\sigma$ in a deterministic function of a noise variable $\epsilon \in \gN(\epsilon;\mathbf{0},\rmI)$, to enable back-propagation. $\rvx_{x \rightarrow x}$ denotes that this reconstructed entity embedding is with $x$ as input and for $x$. VAE generates new entities by sampling a noise vector $\rvz$ and converting it to $\rvx$.

\subsection{EEA Benefits from the Generative Objectives}

Let $x \sim \gX$, $y \sim \gY$ be two entities sampled from the entity sets $\gX$, $\gY$, respectively. The main target of EEA is to learn a predictor that estimates the conditional probability $p_\theta(\rvx|\rvy)$ (and reversely $p_\theta(\rvy|\rvx)$), where $\theta $ represents the parameter set. For simplicity, we assume that the reverse function $p_\theta (\rvy|\rvx)$ shares the same parameter set with $p_\theta (\rvx | \rvy)$.

Now, suppose that one wants to learn a generative model for generating entity embeddings:
\begin{align}
   \log p(\rvx) &= \log p(\rvx) \int p_\theta (\rvy|\rvx) d\rvy \\
   &= \E_{p_\theta (\rvy|\rvx)} \Big[ \log \frac{p(\rvx,\rvy)}{p_\theta (\rvy|\rvx) }  \Big] + \KL (p_\theta (\rvy|\rvx) \parallel p(\rvy|\rvx)),
   \label{eq:generative_model}
\end{align}
where the left-hand side of Equation~(\ref{eq:generative_model}) is the evidence lower bound (ELBO)~\citep{ELBO}, and the right-hand side is the Kullback-Leibler (KL) divergence~\citep{KLD} between our parameterized distribution $p_\theta (\rvy|\rvx)$ (i.e., the predictor) and the true distribution $p(\rvy|\rvx)$.

In typical generative learning, $p(\rvy|\rvx)$ is intractable because $\rvy$ is a noise variable sampled from a normal distribution, and thus $p(\rvy|\rvx)$ is unknown. However, in EEA, we can obtain a few samples by using the training set, which leads to a classical negative sampling loss~\citep{JAPE,MuGNN,MultiKE,MMEA,decentRL,AliNet,EVA,MSNEA,Meaformer,RLEA,NeoEA,MCLEA}:
\begin{align}
\label{eq:negative_sampling}
    \Ls_{\text{ns}} = \sum_i [ -\log (p_\theta (\rvy^i|\rvx^i) p(\rvy^i|\rvx^i)) + \frac{1}{N_{\text{ns}}}\sum_{j \neq i} \log \big(p_\theta (\rvy^j|\rvx^i) (1-p(\rvy^j|\rvx^i))\big) ],
\end{align}
where $(\rvy^i, \rvx^i)$ denotes a pair of aligned entities in the training data. The randomly sampled entity $\rvy^j$ is regarded as the negative entity. $i$, $j$ are the entity IDs. $N_{\text{ns}}$ is the normalization constant. Here, $\Ls_{\text{ns}}$ is formulated as a cross-entropy loss with the label $p(\rvy^j|\rvx^i)$ defined as:
\begin{align}
    p(\rvy^j|\rvx^i) = 
    \begin{cases} 0,\quad & \text{if}\quad i \neq j,\\
    1,\quad & \text{otherwise}
    \end{cases}
\end{align}
Given that EEA typically uses only a small number of aligned entity pairs for training, the observation of $p(\rvy|\rvx)$ may be subject to bias and limitations. To alleviate this problem, the recent GAN-based methods~\citep{SEA,OTEA,NeoEA} propose leveraging entities outside the training set for unsupervised learning. The common idea behind these methods is to make the entity embeddings from different KGs indiscriminative to a discriminator, such that the underlying aligned entities shall be encoded in the same way and have similar embeddings. To formally prove this idea, we dissect the ELBO in Equation~(\ref{eq:generative_model}) as follows:
\begin{align}
    \E_{p_\theta (\rvy|\rvx)} \Big[ \log \frac{p(\rvx,\rvy)}{p_\theta (\rvy|\rvx) }  \Big] &= \E_{p_\theta (\rvy|\rvx)} \Big[ \log p_\theta(\rvx|\rvy) \Big] - \KL(p_\theta (\rvy|\rvx) \parallel p(\rvy))
\end{align}
The complete derivation in this section can be found in Appendix~\ref{app:proof1}. Therefore, we have:
\begin{align}
\label{eq:genarative_view}
    \log p(\rvx) = \underbrace{\vphantom{\E_{p_\theta (\rvy|\rvx)} \Big[ \log p_\theta(\rvx|\rvy) \Big]}  \E_{p_\theta (\rvy|\rvx)} \Big[ \log p_\theta(\rvx|\rvy) \Big]}_{\text{reconstruction term}} - \underbrace{\vphantom{\E_{p_\theta (\rvy|\rvx)} \Big[ \log p_\theta(\rvx|\rvy) \Big]} \KL(p_\theta (\rvy|\rvx) \parallel p(\rvy))}_{\text{distribution matching term}} + \underbrace{\vphantom{\E_{p_\theta (\rvy|\rvx)} \Big[ \log p_\theta(\rvx|\rvy) \Big]} \KL (p_\theta (\rvy|\rvx) \parallel p(\rvy|\rvx))}_{\text{prediction matching term}}
\end{align}

The first term aims to reconstruct the original embedding $\rvx$ based on $\rvy$ generated from $\rvx$, which has not been studied \rrev{in existing discriminative EEA methods~\citep{decentRL,EVA,MCLEA}.} The second term enforces the distribution of $\rvy$ conditioned on $\rvx$ to match the prior distribution of $\rvy$, which has been investigated by the GAN-based EEA methods~\citep{SEA,OTEA,NeoEA}. The third term represents the main objective of EEA (as described in Equation~(\ref{eq:negative_sampling}) where the target $p(\rvy|\rvx)$ is partially observed).

Note that, $p(\rvx)$ is irrelevant to our parameter set $\theta$ and can be treated as a constant during optimization. Consequently, maximizing the ELBO (i.e., maximizing the first term and minimizing the second term) will result in minimizing the third term:

\begin{proposition}
\label{prop:generative_objectives}
Maximizing the reconstruction term and/or minimizing the distribution matching term subsequently minimizes the EEA prediction matching term.
\end{proposition}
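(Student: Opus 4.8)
The plan is to obtain the statement as an immediate algebraic consequence of the decomposition in Equation~(\ref{eq:genarative_view}), using only two facts: that $\log p(\rvx)$ is a constant with respect to the predictor parameters $\theta$ (as already noted above, $p(\rvx)$ does not involve $\theta$), and that the prediction matching term, being a KL divergence, is nonnegative. No heavy machinery is needed; the content is essentially the standard ELBO identity rearranged to isolate the quantity of interest.

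First I would introduce shorthand for the three terms of Equation~(\ref{eq:genarative_view}): write $R(\theta) = \E_{p_\theta(\rvy|\rvx)}\big[\log p_\theta(\rvx|\rvy)\big]$ for the reconstruction term, $D(\theta) = \KL\big(p_\theta(\rvy|\rvx)\parallel p(\rvy)\big)$ for the distribution matching term, and $P(\theta) = \KL\big(p_\theta(\rvy|\rvx)\parallel p(\rvy|\rvx)\big)$ for the prediction matching term. Equation~(\ref{eq:genarative_view}) then reads $\log p(\rvx) = R(\theta) - D(\theta) + P(\theta)$, so that
\[
P(\theta) \;=\; \log p(\rvx) \;-\; \big(R(\theta) - D(\theta)\big),
\]
where $R(\theta) - D(\theta)$ is exactly the ELBO. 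Since the left-hand-side quantity $\log p(\rvx)$ carries no dependence on $\theta$, any parameter update that raises $R(\theta)$ with $D(\theta)$ unchanged, or lowers $D(\theta)$ with $R(\theta)$ unchanged, or more generally raises the ELBO $R(\theta)-D(\theta)$, must lower $P(\theta)$ by the identical amount. This is precisely the claimed implication.

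The only points needing care are bookkeeping rather than mathematical difficulty. One should record that $P(\theta)\ge 0$ because it is a KL divergence; combined with the displayed identity this gives $R(\theta)-D(\theta)\le \log p(\rvx)$, confirming that the ELBO is a genuine lower bound and that driving $P$ down is well posed (equality, hence a perfect predictor $p_\theta(\rvy|\rvx)=p(\rvy|\rvx)$, is reached exactly when the ELBO attains $\log p(\rvx)$). One should also note, since the reverse map $p_\theta(\rvx|\rvy)$ shares the parameter set $\theta$ with $p_\theta(\rvy|\rvx)$, that the phrases ``with $D(\theta)$ unchanged'' and ``with $R(\theta)$ unchanged'' describe admissible directions of movement in $\theta$-space, not an independent reparametrization; the monotone relation $P(\theta) = \mathrm{const} - \mathrm{ELBO}(\theta)$ holds for every $\theta$ regardless, which is all the proposition asserts. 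I expect no real obstacle beyond being explicit about what $\theta$ and $p(\rvx)$ denote and invoking the ELBO decomposition already derived in the excerpt.
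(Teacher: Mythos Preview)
Your proposal is correct and follows essentially the same approach as the paper: rearrange the identity in Equation~(\ref{eq:genarative_view}) to isolate the prediction matching term, observe that $\log p(\rvx)$ is independent of $\theta$, and conclude that increasing the ELBO $R(\theta)-D(\theta)$ forces $P(\theta)$ down. Your write-up is, if anything, slightly more careful than the paper's, since you explicitly record the nonnegativity of $P(\theta)$ and flag the shared-parameter caveat.
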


The primary objective of EEA is to minimize the prediction matching term. Proposition~\ref{prop:generative_objectives} provides theoretical evidence that the generative objectives naturally contribute to the minimization of the EEA objective, thereby enhancing overall performance.

\subsection{The Limitations of GAN-based EEA Methods}

The GAN-based EEA methods leverage a discriminator to discriminate the entities from one KG against those from another KG. Supposed that $\rvx$, $\rvy$ are embeddings produced by an EEA model $\gM$, sampled from the source KG and the target KG, respectively. The GAN-based methods train a discriminator $\gD$ to distinguish $\rvx$ from $\rvy$ (and vice versa), with the following objective:
\begin{align}
\label{eq:GAN_EEA}
    &\operatorname*{argmax}_{\rvx, \rvy, \psi} \Big[ \E_{x \sim \gX} \log\gD_\phi(\gM_\psi(x)) + \E_{y \sim \gY} \log \gD_\phi(\gM_\psi(y)) \Big]  && \text{(Generator)} \\
    +&\operatorname*{argmax}_{\phi} \Big[ \E_{x \sim \gX} \log\gD_\phi(\gM_\psi(x)) + \E_{y \sim \gY} \log (1-\gD_\phi(\gM_\psi(y))) \Big] && \text{(Discriminator)}
\end{align}
Here, the EEA model $\gM$ takes entities $x$, $y$ as input and produces the output embeddings $\rvx$, $\rvy$, respectively. $\gD$ is the discriminator that learns to predict whether the input variable is from the target distribution. $\phi$, $\psi$ are the parameter sets of $\gM$, $\gD$, respectively.

It is important to note that both $\rvx = \gM_\psi(x)$ and $\rvy = \gM_\psi(y)$ do not follow a fixed distribution (e.g., a normal distribution). They are learnable vectors during training, which is significantly different from the objective of a typical GAN, where variables like $\rvx$ (e.g., an image) and $\rvz$ (e.g., sampled from a normal distribution) have deterministic distributions. Consequently, the generator in Equation~(\ref{eq:GAN_EEA}) can be overly strong, allowing $\rvx$, $\rvy$ to be consistently mapped to plausible positions to deceive $\gD$.

Therefore, one major issue with the existing \rrev{GAN-based methods is \emph{mode collapse}~\citep{modecollapse,OTEA,NeoEA}.} Mode collapse often occurs when the generator (i.e., the EEA model in our case) over-optimizes for the discriminator. The generator may find some outputs appear most plausible to the discriminator and consistently produces those outputs. This is harmful for EEA as irrelevant entities are encouraged to have similar embeddings. We argue that \emph{mode collapse} is more likely to occur in the existing GAN-based EEA methods, which is why they often use a very small weight (e.g., $0.001$ or less) to optimize the generator against the discriminator~\citep{OTEA,NeoEA}.

Another limitation of the existing GAN-based methods is their inability to generate new entities. The generated target entity embedding $\rvy_{x \rightarrow y}$ cannot be converted back to the native concrete features, such as the neighborhood $\{y^1, y^2, ...\}$ or attributes $\{a^1, a^2, ...\}$.

\section{Generative Embedding-based Entity Alignment}
\label{sec:method}

\subsection{Mutual Variational Autoencoder}
In many generative tasks, such as image synthesis, the conditional variable (e.g., a textual description) and the input variable (e.g., an image) differ in modality. However, in our case, they are entities from different KGs. Therefore, we propose mutual variational autoencoder (M-VAE) for efficient generation of new entities. One of the most important characteristics of M-VAE lies in the variety of the encode-decode process. It has four different flows:

The first two flows are used for self-supervised learning, i.e., reconstructing the input variables:
\begin{align}
\label{eq:self_flows}
    \rvx_{x\rightarrow x}, \rvz_{x\rightarrow x} = \textit{VAE}(\rvx),\quad \rvy_{y\rightarrow y}, \rvz_{y\rightarrow y} = \textit{VAE}(\rvy),\quad \forall x, \forall y, x \in \gX, y \in \gY
\end{align}
We use the subscript $_{x\rightarrow x}$ to denote the flow is from $x$ to $x$, and similarly for $_{y\rightarrow y}$. $\rvz_{x\rightarrow x}$, $\rvz_{y\rightarrow y}$ are the latent variables (as defined in Equation~\ref{eq:repara}) of the two flows, respectively. In EEA, the majority of alignment pairs are unknown, but all information of the entities is known. Thus, these two flows provide abundant examples to train GEEA in a self-supervised fashion.

The latter two flows are used for supervised learning, i.e., reconstructing the mutual target variables:
\begin{align}
\label{eq:mutual_flows}
    \rvy_{x\rightarrow y}, \rvz_{x\rightarrow y} = \textit{VAE}(\rvx),\quad \rvx_{y\rightarrow x}, \rvz_{y\rightarrow x} = \textit{VAE}(\rvy),\quad \forall (x, y) \in \gS.
\end{align}
It is worth noting that we always share the parameters of VAEs across all flows. We wish the rich experience gained from reconstructing the input variables (Equation~(\ref{eq:self_flows})) can be flexibly conveyed to reconstructing the mutual target (Equation~(\ref{eq:mutual_flows})).

\subsection{Distribution Match}
The existing GAN-based methods directly minimize the KL divergence~\citep{KLD} between two embedding distributions, resulting in the over-optimization of generator and incapability of generating new entities. In this paper, we propose to draw support from the latent noise variable $\rvz$ to avoid these two issues. The distribution match loss is defined as follows:
\begin{align}
\label{eq:distribution_match_loss}
    \Ls_\text{kld}= \KL (p({\rvz_{x\rightarrow x}}), p(\rvz^*)) + \KL (p({\rvz_{y\rightarrow y}}), p(\rvz^*)).
\end{align}
where $p({\rvz_{x\rightarrow x}})$ denotes the distribution of $\rvz_{x\rightarrow x}$, and $p(\rvz^*)$ denotes the target normal distribution. We do not optimize the distributions of $\rvz_{x \rightarrow y}$, $\rvz_{y \rightarrow x}$ in the latter two flows, because they are sampled from seed alignment set $\gS$, a (likely) biased and small training set.

Minimizing $\Ls_\text{kld}$ can be regarded as aligning the entity embeddings from respective KGs to a fixed normal distribution. We provide a formal proof that the entity embedding distributions of two KGs will be aligned although we do not implicitly minimize $\KL (p(\rvx), p(\rvy))$: 

\begin{proposition}
\label{prop:mutual_alignment}
Let $\rvz^*$, $\rvz_{x\rightarrow x}$, $\rvz_{y\rightarrow y}$ be the normal distribution, and the latent variable distributions w.r.t. $\gX$ and $\gY$, respectively. Jointly minimizing the KL divergence $\KL (p({\rvz_{x\rightarrow x}}), p(\rvz^*))$, $\KL (p({\rvz_{y\rightarrow y}}), p(\rvz^*))$ will contribute to minimizing $\KL (p(\rvx), p(\rvy))$:
\begin{align}
    \KL (p(\rvx), p(\rvy)) \propto  \KL (p({\rvz_{x\rightarrow x}}), p(\rvz^*)) + \KL (p({\rvz_{y\rightarrow y}}), p(\rvz^*))
\end{align}
\end{proposition}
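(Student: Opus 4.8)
The plan is to transport the comparison from the entity-embedding spaces of the two KGs into the shared latent space, exploit the fact that the encoder is common to both self-reconstruction flows, and then split the latent-space discrepancy through the common Gaussian reference $p(\rvz^*)$. Concretely, write the two latent distributions as pushforwards of the entity-embedding distributions through one and the same stochastic encoder: letting $q_\theta(\rvz\mid\cdot)$ denote the map $\rvh=\text{Encoder}(\cdot)$ followed by the reparameterization $\rvz=\mu(\rvh)+\sigma(\rvh)\odot\epsilon$ with $\epsilon\sim\gN(\epsilon;\vzero,\rmI)$, we have $p(\rvz_{x\rightarrow x})=\int q_\theta(\rvz\mid\rvx)\,p(\rvx)\,d\rvx$ and $p(\rvz_{y\rightarrow y})=\int q_\theta(\rvz\mid\rvy)\,p(\rvy)\,d\rvy$ with the \emph{same} $q_\theta$ in both, since parameters are shared across all flows of M-VAE. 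Because the decoder in the self-reconstruction flows of Equation~(\ref{eq:self_flows}) is trained to invert $q_\theta$, the deterministic decoder channel $D$ satisfies $D_\#\,p(\rvz_{x\rightarrow x})\approx p(\rvx)$ and $D_\#\,p(\rvz_{y\rightarrow y})\approx p(\rvy)$ up to a reconstruction error that is itself driven down by the remaining M-VAE terms. Applying $D$ to both latent distributions and invoking the data-processing inequality for total variation gives $\mathrm{TV}(p(\rvx),p(\rvy))\le \mathrm{TV}(p(\rvz_{x\rightarrow x}),p(\rvz_{y\rightarrow y}))+\text{(reconstruction error)}$, so it suffices to align the two latent distributions.

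Next I would route $\mathrm{TV}(p(\rvz_{x\rightarrow x}),p(\rvz_{y\rightarrow y}))$ through the common target $p(\rvz^*)$. Since total variation is a genuine metric, the triangle inequality gives $\mathrm{TV}(p(\rvz_{x\rightarrow x}),p(\rvz_{y\rightarrow y}))\le \mathrm{TV}(p(\rvz_{x\rightarrow x}),p(\rvz^*))+\mathrm{TV}(p(\rvz^*),p(\rvz_{y\rightarrow y}))$, and Pinsker's inequality bounds each term by the square root of the corresponding KL divergence, so that
\[
\mathrm{TV}(p(\rvx),p(\rvy))\ \lesssim\ \sqrt{\KL(p(\rvz_{x\rightarrow x}),p(\rvz^*))}+\sqrt{\KL(p(\rvz_{y\rightarrow y}),p(\rvz^*))}
\]
up to the reconstruction error. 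Hence the right-hand side of the proposition, which is exactly $\Ls_\text{kld}$, controls a bona fide distance between $p(\rvx)$ and $p(\rvy)$: driving $\KL(p(\rvz_{x\rightarrow x}),p(\rvz^*))$ and $\KL(p(\rvz_{y\rightarrow y}),p(\rvz^*))$ to zero forces $p(\rvx)$ and $p(\rvy)$ to coincide, and more generally their discrepancy is bounded by a monotone function of $\Ls_\text{kld}$ that vanishes at zero — which is the sense in which the stated proportionality holds. One can run the same argument in $W_2$ instead, using Talagrand's transportation inequality for the Gaussian $p(\rvz^*)$ in place of Pinsker and a Lipschitz decoder in the data-processing step.

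The main obstacle is precisely this middle step. It is tempting to write a triangle inequality directly for KL, $\KL(p(\rvx),p(\rvy))\le \KL(p(\rvx),p(\rvz^*))+\KL(p(\rvz^*),p(\rvy))$, but this is false in general: KL is neither symmetric nor subadditive, so no such ``KL triangle inequality'' exists, and a naive attempt fails. The honest route therefore has to descend to an actual metric (total variation, or $W_2$), which is exactly why square roots and constants appear and why the clean statement is a proportionality/bound rather than the literal identity $\KL(p(\rvx),p(\rvy))=c\,\Ls_\text{kld}$. A secondary and milder gap is the data-processing step in the direction we need it: it presumes that the shared encoder does not spuriously collapse $p(\rvx)$ onto $p(\rvy)$, equivalently that encoder and decoder are approximately mutually inverse; this is precisely what the self-reconstruction flows of Equation~(\ref{eq:self_flows}) enforce, so the reconstruction error appearing above is governed by the same M-VAE objective and does not undermine the conclusion.
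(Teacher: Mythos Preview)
Your route is correct in spirit but genuinely different from the paper's. The paper does not pass through a metric at all: it assumes $\rvz_{x\rightarrow x}$, $\rvz_{y\rightarrow y}$, $\rvz^*$ are Gaussians with parameters $(\mu_{x\rightarrow x},\sigma_{x\rightarrow x})$, $(\mu_{y\rightarrow y},\sigma_{y\rightarrow y})$, $(\mu^*,\sigma^*)=(0,1)$, writes out the closed-form Gaussian KL, and then computes the \emph{algebraic difference}
\[
\KL(p(\rvz_{x\rightarrow x}),p(\rvz^*))+\KL(p(\rvz_{y\rightarrow y}),p(\rvz^*))-\KL(p(\rvz_{x\rightarrow x}),p(\rvz_{y\rightarrow y}))
\]
explicitly, showing term by term that it tends to $0$ as $\mu_{y\rightarrow y}\to 0$, $\sigma_{y\rightarrow y}\to 1$ (and symmetrically for the other direction). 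In effect the paper \emph{establishes} a limiting ``KL triangle'' in the Gaussian model rather than sidestepping the lack of one, which is exactly the obstacle you flagged. The final passage from $\rvz$-alignment to $\rvx$/$\rvy$-alignment is then asserted in a single closing sentence, without the decoder/data-processing argument you give.

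What each buys: the paper's computation is self-contained and yields an exact asymptotic identity under the Gaussian ansatz, at the price of that ansatz (the aggregate posteriors are really Gaussian mixtures) and of leaving the latent-to-embedding transfer implicit. Your argument via TV plus Pinsker (or $W_2$ plus Talagrand) is model-agnostic, is honest that the conclusion is a monotone bound rather than a literal proportionality, and actually supplies the missing step back to $p(\rvx),p(\rvy)$ through the shared decoder and the self-reconstruction flows. The two are complementary: the paper is sharper when one accepts the Gaussian approximation of the latent marginals, while your version is more general and more explicit about where the approximations enter.
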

\begin{proof}
Please see Appendix~\ref{app:proof2}.
\end{proof}

\begin{figure}[t]
	\centering
	\includegraphics[width=.95\linewidth]{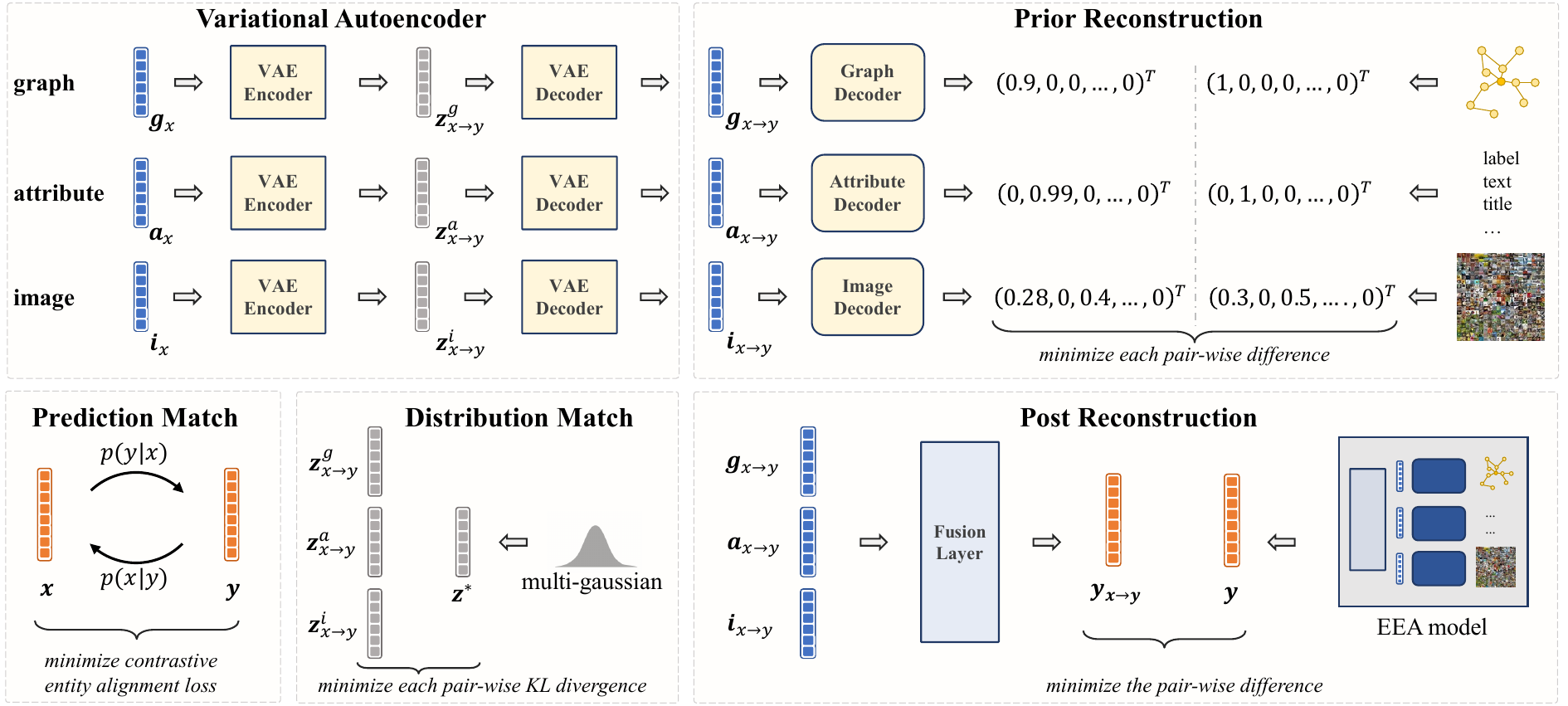}
	\caption{The workflow of GEEA. Top: different sub-VAEs process different sub-embeddings, and the respective decoders convert the sub-embeddings back to concrete features. Bottom-left: the entity alignment prediction loss is retained. Bottom-center: the latent variables of sub-VAEs are used for distribution matching. Bottom-right: The reconstructed sub-embeddings are feed into the fusion layer in the EEA model to produce the reconstructed joint embedding for post reconstruction.}
	\label{fig:arch}
	\vspace{-1em}
\end{figure}

\subsection{Prior Reconstruction}
The prior reconstruction aims to reconstruct the sub-embedding of each modality and recover the original concrete feature from the sub-embedding. Take the relational graph information of flow $x\rightarrow y$ as an example, we first employ a sub-\textit{VAE} to process the input sub-embedding:
\begin{align}
    \rvg_{x\rightarrow y}, \rvz_{x\rightarrow y}^g = \textit{VAE}_g(\rvg_x)
\end{align}
where $\textit{VAE}_g$ denotes the variational autoencoder for relational graph information. $\rvg_x$ is the graph embedding of $x$, and $\rvg_{x\rightarrow y}$ is the reconstructed graph embedding for $y$ based on $x$. $\rvz_{x\rightarrow y}^g$ is the corresponding latent variable. To recover the original features (i.e., the neighborhood information of $y$), we consider a prediction loss defined as:
\begin{align}
    \label{eq:prior_reconstruction_loss}
    \Ls_{\rvg_{x\rightarrow y}} = g_y \log \textit{Decoder}_g(\rvg_{x\rightarrow y}) + (1-g_y) \log (1- \textit{Decoder}_g(\rvg_{x\rightarrow y})\rev{)}
\end{align}
Here, $\Ls_{\rvg_{x\rightarrow y}}$ is a binary cross-entropy (BCE) loss. We employ a decoder $\textit{Decoder}_g$ to convert the reconstructed sub-embedding $\rvg_{x\rightarrow y}$ to a probability estimation regarding the neighborhood of $y$.

\subsection{Post Reconstruction}

We propose post reconstruction to ensure the reconstructed features of different modalities belong to the same entity. We re-input the reconstructed sub-embeddings $\{\rvg_{x\rightarrow y}, \rva_{x\rightarrow y}, ...\}$ to the fusion layer (defined in the EEA model $\gM$) to obtain a reconstructed joint embedding $\rvy_{x\rightarrow y}$. We then employs mean square error (MSE) loss to match the reconstructed joint embedding with the original one:
\begin{align}
    \label{eq:post_reconstruction_loss}
    \rvy_{x\rightarrow y} &= \textit{Fusion}(\{\rvg_{x\rightarrow y}, \rva_{x\rightarrow y}, ...\}),\quad \forall (x, y) \in \gS\\
    \Ls_{x\rightarrow y} &= \textit{MSE}(\rvy_{x\rightarrow y}, \textit{NoGradient}(\rvy)),\quad \forall (x, y) \in \gS,
\end{align}
where $\Ls_{x\rightarrow y}$ denotes the post reconstruction loss for the reconstructed joint embedding $\rvy_{x\rightarrow y}$. 
\textit{Fusion} represents the fusion layer in $\gM$, and $\textit{MSE}$ is the mean square error. We use the copy value of the original joint embedding $\textit{NoGradient}(\rvy)$ to avoid $\rvy$ inversely match $\rvy_{x\rightarrow y}$.

\subsection{Implementation Details}
We take Figure~\ref{fig:arch} as an example to illustrate the workflow of GEEA. First, the sub-embeddings outputted by $\gM$ are used as input for sub-VAEs (top-left). Then, the reconstructed sub-embeddings are passed to respective decoders to predict the concrete features of different modalities (top-right). The conventional entity alignment prediction loss is also retained in GEEA (bottom-left). The latent variables outputted by sub-VAEs are further used to match the predefined normal distribution (bottom-center). The reconstructed sub-embeddings are fed into the fusion layer to obtain a reconstructed joint embedding, which is used to match the true joint embedding for post reconstruction (bottom-right). The final training loss is defined as:
\begin{align}
    \Ls =  \underbrace{\sum_{f \in \gF } \Big(\underbrace{\sum_{m \in \{g, a, i, ...\}} \Ls_{\rvm_f}}_{\text{prior reconstruction}} + \underbrace{\Ls_f}_{\text{post reconstruction}} \Big) }_{\text{reconstruction term}} + \underbrace{\sum_{m \in \{g, a, i, ...\}} \Ls_{\text{kld},m}}_{\text{distribution matching term}} + \underbrace{\Ls_\text{ns}}_{\text{prediction matching term}}
\end{align}
where $\gF = \{x\rightarrow x, y\rightarrow y, x\rightarrow y, y\rightarrow x\}$ is the set of all flows, and $\{g, a, i, ...\}$ is the set of all available modalities. For more details, please refer to Appendix~\ref{app:details}.

\begin{table}[t]
	\centering
	\caption{Entity alignment results on DBP15K datasets, without surface information and iterative strategy. $\uparrow$: higher is better; $\downarrow$: lower is better. Average of $5$ runs, the same below.}
	\resizebox{\textwidth}{!}{
        \small
		\begin{tabular}{lcccccccccc}
			\toprule
			\multirow{2}{*}{Models} & \multicolumn{3}{c}{DBP15K$_\text{ZH-EN}$} & \multicolumn{3}{c}{DBP15K$_\text{JA-EN}$} & \multicolumn{3}{c}{DBP15K$_\text{FR-EN}$} \\
			\cmidrule(lr){2-4} \cmidrule(lr){5-7} \cmidrule(lr){8-10} 
			& Hits@1$\uparrow$ & Hits@10$\uparrow$ & MRR$\uparrow$ & Hits@1$\uparrow$ & Hits@10$\uparrow$ & MRR$\uparrow$ & Hits@1$\uparrow$ & Hits@10$\uparrow$ & MRR$\uparrow$  \\ \midrule
                MUGNN~\citep{MuGNN} & .494 & .844 & .611 & .501 & .857 & .621 & .495 & .870 & .621 \\
                AliNet~\citep{AliNet} & .539 & .826 & .628 & .549 & .831 & .645 & .552 & .852 & .657 \\
                decentRL~\citep{decentRL} & .589 & .819 & .672 & .596 & .819 & .678 & .602 & .842 & .689 \\
                \midrule
			EVA~\citep{EVA} & {.680} & {.910} & {.762} & {.673} & {.908} & {.757} & {.683} & \underline{.923} & {.767} \\
			MSNEA~\citep{MSNEA}  & .601 & .830 & .684 & .535 & .775 & .617 & .543 & .801 & .630 \\
			MCLEA~\citep{MCLEA} & .715 & .923 & .788 & .715 & .909 & .785 & .711 & .909 & .782 \\
                 NeoEA (MCLEA) ~\citep{NeoEA} & \underline{.723} & \underline{.924} & \underline{.796} & \underline{.721} & \underline{.909} & \underline{.789} & \underline{.717} & {.910} & \underline{.787} \\
   
   \midrule
   GEEA & \textbf{.761} & \textbf{.946} & \textbf{.827} & \textbf{.755} & \textbf{.953} & \textbf{.827} & \textbf{.776} & \textbf{.962} & \textbf{.844} \\
			\bottomrule
	\end{tabular}}
	\label{tab:ea_results}
 \vspace{-.8em}
\end{table}

\begin{table}[t]
\centering
\begin{minipage}{0.70\textwidth}
	\centering
	\caption{Results on FB15K-DB15K and FB15K-YAGO15K datasets.}
	\resizebox{\textwidth}{!}{
        \small
		\begin{tabular}{l|cccc|ccc}
			\toprule
			\multirow{2}{*}{Models} & \multirow{2}{*}{ \makecell{ \# Paras (M) / \\ \rev{ Training time (s) } } }  & \multicolumn{3}{c}{FB15K-DB15K}  & \multicolumn{3}{c}{FB15K-YAGO15K}\\
			\cmidrule(lr){3-5} \cmidrule(lr){6-8}
			& & Hits@1$\uparrow$ & Hits@10$\uparrow$ & MRR$\uparrow$ & Hits@1$\uparrow$ & Hits@10$\uparrow$ & MRR$\uparrow$ \\ \midrule
			EVA & 10.2/1,467.6 &  .199 & .448 & .283 & .153 & .361 & .224 \\
			MSNEA   & 11.5/775.2  &  .114 & .296 & .175 & .103 & .249 & .153\\
   
			MCLEA  & 13.2/285.4 &   .295 & .582 & .393 & .254 & .484 & .332\\
   \midrule
   GEEA$_\text{SMALL}$ & 11.2/217.3 &   \underline{.322} & \underline{.602} & \underline{.417} & \underline{.270} & \underline{.513} & \underline{.352}  \\
   GEEA & 13.9/252.4 &   \textbf{.343} & \textbf{.661} & \textbf{.450} & \textbf{.298} & \textbf{.585} & \textbf{.393}  \\

			\bottomrule
	\end{tabular}}
	\label{tab:ea_results2}
\end{minipage}
\hfill
\begin{minipage}{0.28\linewidth}
	\centering
	\includegraphics[width=\textwidth]{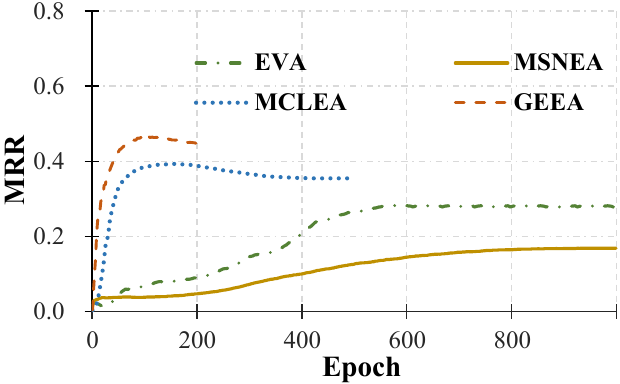}
	\captionof{figure}{MRR results on FBDB15K, w.r.t. epochs.}
	\label{fig:track}
\end{minipage}
\vspace{-1.5em}
\end{table}

\section{Experiments}
\label{sec:expr}

\subsection{Settings}
We used the multi-modal EEA benchmarks (DBP15K~\citep{JAPE}, FB15K-DB15K and FB15K-YAGO15K~\citep{MMEA}) as datasets, excluding surface information (i.e., the textual label information) to prevent data leakage~\citep{OpenEA, Meaformer}. The baselines MUGNN~\citep{MuGNN}, AliNet~\citep{AliNet} and decentRL~\citep{decentRL} are methods tailored to relational graphs, while EVA~\citep{EVA}, MSNEA~\citep{MSNEA} and MCLEA~\citep{MCLEA} are state-of-the-art multi-modal EEA methods. We chose MCLEA~\citep{MCLEA} as the EEA model of GEEA and NeoEA~\citep{NeoEA} in the main experiments. The results of using other models (e.g., EVA and MSNEA) can be found in Appendix~\ref{app:exprs}. The neural layers and input/hidden/output dimensions were kept identical for fair comparison.

\subsection{Entity Alignment Results}
The entity alignment results on DBP15K are shown in Tables~\ref{tab:ea_results}. \rev{Following the existing works~\citep{EVA,MCLEA}, we used Hits@1, Hits@10 to measure the proportion of target entities that appear within top 1, top 10, respectively. We also used MRR to measure the reciprocal ranks of the target entities.} The multi-modal methods significantly outperformed the single-modal methods, demonstrating the strength of leveraging different resources. Remarkably, our GEEA achieved new state-of-the-art performance on all three datasets across all metrics. The superior performance empirically verified the correlations between the generative objectives and EEA objective.
In Table~\ref{tab:ea_results2}, we compared the performance of the multi-modal methods on FB15K-DB15K and FB15K-YAGO15K, where GEEA remained the best-performing method. Nevertheless, we observe that GEEA had more parameters compared with others, as it used VAEs and decoders to decode the embeddings back to concrete features. To probe the effectiveness of GEEA, we reduced the number of neurons to construct a GEEA$_\text{SMALL}$ and it still outperformed others with a significant margin. 

In Figure~\ref{fig:track}, we plotted the MRR results w.r.t. training epochs on FBDB15K, where MCLEA and GEEA learned much faster than the methods with fewer parameters (i.e., EVA and MSNEA).  In Figure~\ref{fig:ratio}, we further compared the performance of these two best-performing methods under different ratios of training alignment. We can observe that our GEEA achieved consistent better performance than MCLEA across various settings and metrics. The performance gap was more significantly when there were fewer training entity alignments ($\leq 30\%$). For instance, GEEA surpassed the second-best method by 36.1\% in Hits@1 when only 10\% aligned entity pairs were used for training. 

In summary, the primary weakness of GEEA is its higher parameter count compared to existing methods. However, we demonstrated that a compact version of GEEA still outperformed the baselines in Table~\ref{tab:ea_results2}. This suggests that its potential weakness is manageable. Additionally, GEEA excelled in utilizing training data, achieving greater performance gains with less available training data.

\begin{table}[t]
	\centering
	\caption{Entity synthesis results on five datasets. PRE ($\times 10^{-2}$), RE ($\times 10^{-2}$) denote the reconstruction errors for prior concrete features and output embeddings, respectively. }
	\resizebox{\textwidth}{!}{
        \small
		\begin{tabular}{lrrrrrrrrrrrrrrr}
			\toprule
			\multirow{2}{*}{Models} & \multicolumn{3}{c}{DBP15K$_\text{ZH-EN}$} & \multicolumn{3}{c}{DBP15K$_\text{JA-EN}$} & \multicolumn{3}{c}{DBP15K$_\text{FR-EN}$} & \multicolumn{3}{c}{FB15K-DB15K}  & \multicolumn{3}{c}{FB15K-YAGO15K} \\
			\cmidrule(lr){2-4} \cmidrule(lr){5-7} \cmidrule(lr){8-10} \cmidrule(lr){11-13}  \cmidrule(lr){14-16} 
			& PRE$\downarrow$ & RE$\downarrow$ & FID$\downarrow$ & PRE$\downarrow$ & RE$\downarrow$ & FID$\downarrow$ & PRE$\downarrow$ & RE$\downarrow$ & FID$\downarrow$ & PRE$\downarrow$ & RE$\downarrow$ & FID$\downarrow$ & PRE$\downarrow$ & RE$\downarrow$ & FID$\downarrow$  \\ \midrule
   MCLEA + decoder & 8.104 & 4.218 &  N/A  & 7.640 & 5.441 &  N/A  & 10.578 & 5.985 &  N/A & 18.504 & $\inf$ &  N/A & 20.997 & $\inf$ &  N/A \\
   VAE + decoder           & 0.737 & \underline{0.206} &  \underline{1.821}  & 0.542 & 0.329 & \underline{2.184}  & 0.856 & 0.689 & 3.083 & 10.564 & \underline{11.354} &  10.495 & 9.645 & 9.982 &  16.180 \\ 
   Sub-VAEs + decoder      & \underline{0.701} & 0.246 &  1.920 & \underline{0.531} & \underline{0.291} & 2.483  & \underline{0.514} & \underline{0.663} & \underline{2.694} & \underline{3.557} & 15.589 &  \underline{4.340} & \underline{2.424} & \underline{5.576} &  \underline{5.503} \\
   \midrule
   GEEA & \textbf{0.438} & \textbf{0.184}  & \textbf{0.935} &  \textbf{0.385} & \textbf{0.195} & \textbf{1.871} &  \textbf{0.451}  &  \textbf{0.121} & \textbf{2.422} &  \textbf{3.141} & \textbf{6.151} & \textbf{3.089} & \textbf{1.730} & \textbf{2.039} & \textbf{3.903}\\
			\bottomrule
	\end{tabular}}
	\label{tab:es_results}
 \vspace{-1em}
\end{table}

\begin{figure}[t]
	\centering
	\includegraphics[width=\linewidth]{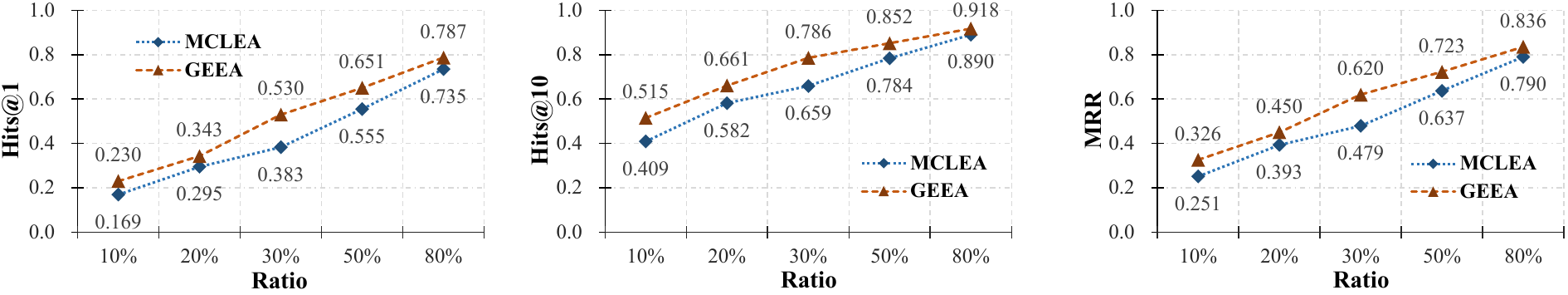}
	\caption{Entity alignment results on FBDB15K, w.r.t. ratios of training alignment.}
	\label{fig:ratio}
	\vspace{-2em}
\end{figure}

\subsection{Entity Synthesis Results}
We conducted entity synthesis experiments by modifying the EEA benchmarks. We randomly selected 30\% of the source entities in the testing alignment set as \emph{dangling entities}, and removed the information of their counterpart entities during training. The goal was to reconstruct the information of their counterpart entities. We evaluated the performance using several metrics: the prior reconstruction error (PRE) for concrete features, the reconstruction error (RE) for the sub-embeddings, and Frechet inception distance (FID) for unconditional synthesis~\citep{FID}. FID is a popular metric for evaluating generative models by measuring the feature distance between real and generated samples.

We implemented several baselines for comparison and present the results in Table~\ref{tab:es_results}: MCLEA with the decoders performed worst and it could not generate new entities unconditionally. Using Sub-VAEs to process different modalities performed better than using one VAE to process all modalities. However, the VAEs in Sub-VAEs could not support each other, and sometimes they failed to reconstruct the embeddings (e.g., the RE results on FB15K-DB15K). By contrast, our GEEA consistently and significantly outperformed these baselines. We also noticed that the results on FB15K-DB15K and FB15K-YAGO15K were worse than those on DBP15K. This could be due to the larger heterogeneity between two KGs compared to the heterogeneity between two languages of the same KG.

We present some generated samples of GEEA conditioned on the source dangling entities in Table~\ref{tab:samples}. GEEA not only generated samples with the exact information that existed in the target KG, but also completed the target entities with highly reliable predictions. For example, the entity \textit{Star Wars (film)} in target KG only had three basic attributes in the target KG, but GEEA predicted that it may also have the attributes like \textit{imdbid} and \textit{initial release data}.

\begin{table*}[t]
\small
\centering
\caption{Entity synthesis samples from the FB15K-DB15K dataset.  The \textbf{boldfaced} denotes the exactly matched entry, while the \underline{underlined} denotes the potentially true entry. }
\label{tab:samples}
\resizebox{\linewidth}{!}{\scriptsize
\begin{tabular}{cc|ccc|ccc}
\toprule
\multicolumn{2}{c}{Source} & \multicolumn{3}{c}{Target} & \multicolumn{3}{c}{GEEA Output}\\ \cmidrule(lr){1-2} \cmidrule(lr){3-5} \cmidrule(lr){6-8}
Entity & Image & Image & Neighborhood & Attribute & Image & Neighborhood & Attribute \\ 
\midrule
\makecell{Star Wars\\ (film)} & \raisebox{-.5\totalheight}{\includegraphics[width=0.07\textwidth]{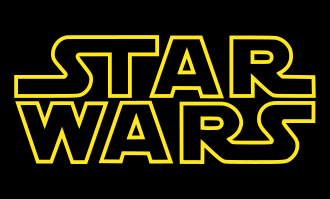}} & \raisebox{-.5\totalheight}{\includegraphics[width=0.07\textwidth]{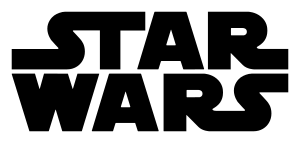}} & \makecell{20th Century Fox,\\ George Lucas, \\ John Williams} & \makecell{runtime, \\ gross, \\ budget} & \raisebox{-.5\totalheight}{\includegraphics[width=0.07\textwidth]{imgs/star_wars_dbp.png}} & \makecell{\textbf{20th Century Fox},  \textbf{George Lucas}, \\ \underline{Star Wars: Episode II}, Willow (film), \\ \underline{Aliens (film)}, \underline{Star Wars: The Clone War}} & \makecell{\underline{initial release date}, \\\textbf{runtime}, \textbf{budget}, \textbf{gross},\\ \underline{imdbId}, \underline{numberOfEpisodes}}\\
\midrule
\makecell{George\\ Harrison \\ (musician)} & \raisebox{-.5\totalheight}{\includegraphics[width=0.07\textwidth]{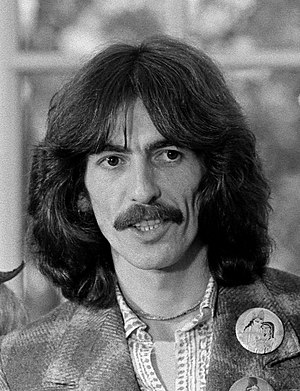}} & \raisebox{-.5\totalheight}{\includegraphics[width=0.07\textwidth]{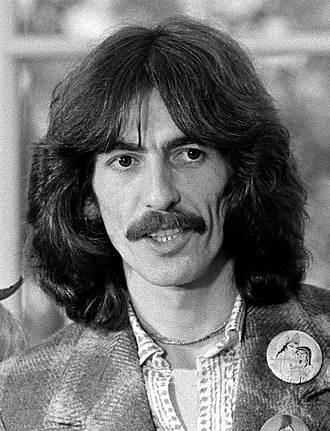}} & \makecell{The Beatles, Guitar,\\ Rock music, \\ Klaus Voormann, \\  Jeff Lynne, Pop music} & \makecell{birthDate, deathDate,\\ activeYearsStartYear, \\ activeYearsEndYear,\\ imdbId} & \raisebox{-.5\totalheight}{\includegraphics[width=0.07\textwidth]{imgs/george_harrison_dbp.jpg}} & \makecell{\textbf{The Beatles},  \\ \underline{The Band}, \underline{Ringo Starr},\\\textbf{Klaus Voormann},\\ \textbf{Jeff Lynne}, \textbf{Rock music}} & \makecell{\underline{deathYear}, \underline{birthYear}, \\\textbf{deathDate}, \textbf{birthDate}, \\\textbf{activeYearsStartYear},\\ \textbf{activeYearsEndYear}, \\ \textbf{imdbId}, \underline{height}, \underline{networth}}\\

\bottomrule
\end{tabular}}
\vspace{-1em}
\end{table*}

\subsection{Ablation Study}
We conducted ablation studies to verify the effectiveness of each module in GEEA. In Table~\ref{tab:ablation_study}, we can observe that the best results were achieved by the complete GEEA, and removing any module resulted in a performance loss. Interestingly, GEEA still worked even if we did not employ an EEA loss (the $2$nd row) in the entity alignment experiment. It captured alignment information without the explicit optimization of the entity alignment objective through contrastive loss, which is an indispensable module in previous EEA methods. This observation further validates the effectiveness of GEEA.

\begin{table*}[t]
\small
\centering
\caption{Ablation study results on DBP15K$_\text{ZH-EN}$.}
\label{tab:ablation_study}
\resizebox{.95\linewidth}{!}{\scriptsize
\begin{tabular}{cccc|ccc|ccc}
\toprule
Prediction & Distribution & Prior & Post & \multicolumn{3}{c}{Entity Alignment} & \multicolumn{3}{c}{Entity Synthesis}\\ 
Match & Match & Reconstruction & Reconstruction & Hits@1$\uparrow$ & Hits@10$\uparrow$ & MRR$\uparrow$ & PRE$\downarrow$ & RE$\downarrow$ & FID$\downarrow$ \\ 
\midrule
$\surd$ & $\surd$ & $\surd$ & $\surd$ & \textbf{.761} & \textbf{.946} & \textbf{.827} & \textbf{0.438} & \textbf{0.184}  & \textbf{0.935} \\
 & $\surd$ & $\surd$ & $\surd$ & .045 & .186 & .095 & 0.717 & 0.306 & 2.149\\
$\surd$ &  & $\surd$ & $\surd$ & .702 & .932 & .783 & \underline{0.551} & \underline{0.193} & 1.821\\  
$\surd$ & $\surd$ &  & $\surd$ & .746 & .930 & .813 & $\inf$ & 0.267 & \underline{1.148}\\
$\surd$ & $\surd$ & $\surd$ &  & \underline{.750} & \underline{.942} & \underline{.819}  & 0.701 & 0.246 &  1.920\\
\bottomrule
\end{tabular}}
\vspace{-1em}
\end{table*}

\section{Related Works}
\label{sec:related_works}

\paragraph{Embedding-based Entity Alignment} Most pioneer works focus on modeling the relational graph information. They can be divided into triplet-based~\citep{MTransE,JAPE,SEA} and GNN-based~\citep{GCN-Align,AliNet}.

Recent methods explore multi-modal KG embedding for EEA~\citep{MultiKE,MMEA,EVA,MSNEA,Meaformer,MCLEA}. 
Although GEEA is designed for multi-modal EEA, it differs by focusing on objective optimization rather than specific models. GAN-based methods~\citep{SEA,OTEA,NeoEA} are closely related to GEEA but distinct, as GEEA prioritizes the reconstruction process, while the existing methods focus on processing relational graph information for EEA. GEEA can employ these methods for processing relational graph information if necessary. Another distinction is that the existing works do not consider the reconstruction process for the concrete features.

\paragraph{Variational Autoencoder}
We draw the inspiration from various excellent works, e.g., VAEs, flow-based models, GANs, and diffusion models that have achieved state-of-the-art performance in many fields~\citep{FID,kong2020diffwave,Mittal2021Music,nichol2021improved,ho2020denoising,DBLP:conf/cvpr/RombachBLEO22}. Furthermore, recent studies~\citep{austin2021structured,hoogeboom2021argmax,hoogeboom2022autoregressive,DBLP:conf/nips/LiTGLH22} find that these generative models can be used in controllable text generation. To the best of our knowledge, GEEA is the first method capable of generating new entities with concrete features. The design of M-VAE, prior and post reconstruction also differs from existing generative models and may offer insights for other domains.

\section{Conclusion}
\label{sec:conclusion}
This paper presents a theoretical analysis of how generative models can enhance EEA learning and introduces GEEA to address the limitations of existing GAN-based methods. Experiments demonstrate that GEEA achieves state-of-the-art performance in entity alignment and entity synthesis tasks. Future work will focus on designing new multi-modal encoders to enhance generative ability.

\section*{Acknowledgment}
We would like to thank all anonymous reviewers for their insightful and invaluable comments. This work is funded by National Natural Science Foundation of China (NSFCU23B2055/NSFCU19B2027/NSFC91846204), Zhejiang Provincial Natural Science Foundation of China (No.LGG22F030011), Fundamental Research Funds for the Central Universities (226-2023-00138), and the EPSRC project ConCur (EP/V050869/1).

\bibliographystyle{iclr2024_conference}
\bibliography{references}

\clearpage
\appendix

\section{Proofs of Things}
\label{app:proofs}
\subsection{The Complete Proof of Proposition~\ref{prop:generative_objectives}}
\label{app:proof1}
\begin{proof}

Let $x \sim \gX$, $y \sim \gY$ be two entities sampled from the entity sets $\gX$, $\gY$, respectively. The main target of EEA is to learn a predictor that estimates the conditional probability $p_\theta(\rvx|\rvy)$ (and reversely $p_\theta(\rvy|\rvx)$), where $\theta $ represents the parameter set. For simplicity, we assume that the reverse function $p_\theta (\rvy|\rvx)$ shares the same parameter set with $p_\theta (\rvx | \rvy)$.

Now, suppose that one wants to learn a generative model for generating entity embeddings:
\begin{align}
   \log p(\rvx) &= \log p(\rvx) \int p_\theta (\rvy|\rvx) d\rvy \\
   &= \int p_\theta (\rvy|\rvx) \log p(\rvx) d\rvy  \\
   &= \E_{p_\theta (\rvy|\rvx)} [\log p(\rvx)] \\
   &= \E_{p_\theta (\rvy|\rvx)} \Big[ \log \frac{p(\rvx,\rvy)}{p(\rvy|\rvx)}   \Big]\\
   &= \E_{p_\theta (\rvy|\rvx)} \Big[ \log \frac{p(\rvx,\rvy) p_\theta (\rvy|\rvx)}{p(\rvy|\rvx) p_\theta (\rvy|\rvx) }   \Big] \\
   &= \E_{p_\theta (\rvy|\rvx)} \Big[ \log \frac{p(\rvx,\rvy)}{p_\theta (\rvy|\rvx) }  \Big] + \E_{p_\theta (\rvy|\rvx)} \Big[ \log \frac{p_\theta (\rvy|\rvx)}{p(\rvy|\rvx)}   \Big]  \\
   &= \E_{p_\theta (\rvy|\rvx)} \Big[ \log \frac{p(\rvx,\rvy)}{p_\theta (\rvy|\rvx) }  \Big] + \KL (p_\theta (\rvy|\rvx) \parallel p(\rvy|\rvx)),
\end{align}
where the left-hand side of Equation~(\ref{eq:generative_model}) is the evidence lower bound (ELBO)~\citep{ELBO}, and the right-hand side is the KL divergence~\citep{KLD} between our parameterized distribution $p_\theta (\rvy|\rvx)$ (i.e., the predictor) and the true distribution $p(\rvy|\rvx)$.

The recent GAN-based methods~\citep{SEA,OTEA,NeoEA} propose to leverage the entities out of training set for unsupervised learning. Their common idea is to make the entity embeddings from different KGs indiscriminative to a discriminator, and the underlying aligned entities shall be encoded in the same way and have similar embeddings. To formally prove this idea, we dissect the ELBO in Equation~(\ref{eq:generative_model}) as follows:
\begin{align}
    \E_{p_\theta (\rvy|\rvx)} \Big[ \log \frac{p(\rvx,\rvy)}{p_\theta (\rvy|\rvx) }  \Big] &= \E_{p_\theta (\rvy|\rvx)} \Big[ \log \frac{p_\theta(\rvx|\rvy) p(\rvy)}{p_\theta (\rvy|\rvx) }  \Big]\\
    &= \E_{p_\theta (\rvy|\rvx)} \Big[ \log p_\theta(\rvx|\rvy) \Big] + \E_{p_\theta (\rvy|\rvx)} \Big[ \log \frac{p(\rvy)}{p_\theta (\rvy|\rvx) }  \Big]\\
    &= \E_{p_\theta (\rvy|\rvx)} \Big[ \log p_\theta(\rvx|\rvy) \Big] - \KL(p_\theta (\rvy|\rvx) \parallel p(\rvy))
\end{align}

Therefore, we have:
\begin{align}
    \log p(\rvx) = \underbrace{\vphantom{\E_{p_\theta (\rvy|\rvx)} \Big[ \log p_\theta(\rvx|\rvy) \Big]}  \E_{p_\theta (\rvy|\rvx)} \Big[ \log p_\theta(\rvx|\rvy) \Big]}_{\text{reconstruction term}} - \underbrace{\vphantom{\E_{p_\theta (\rvy|\rvx)} \Big[ \log p_\theta(\rvx|\rvy) \Big]} \KL(p_\theta (\rvy|\rvx) \parallel p(\rvy))}_{\text{distribution matching term}} + \underbrace{\vphantom{\E_{p_\theta (\rvy|\rvx)} \Big[ \log p_\theta(\rvx|\rvy) \Big]} \KL (p_\theta (\rvy|\rvx) \parallel p(\rvy|\rvx))}_{\text{prediction matching term}}
\end{align}

The first term aims to reconstruct the original embedding $\rvx$ based on $\rvy$ generated from $\rvx$, which has not been studied by the existing discriminative EEA methods~\citep{decentRL,EVA,MCLEA}. The second term imposes the distribution $\rvy$ conditioned on $\rvx$ to match the prior distribution of $\rvy$, which has been investigated by the GAN-based EEA methods~\citep{SEA,OTEA,NeoEA}. The third term is the main objective of EEA (i.e., Equation~(\ref{eq:negative_sampling}) with the target $p(\rvy|\rvx)$ being only partially observed).

Note that, $p(\rvx)$ is irrelevant to our parameter set $\theta$ and thus can be regarded as a constant during optimization. Therefore, maximizing the ELBO (i.e., maximizing the first term and minimizing the second term) will result in minimizing the third term, concluding the proof.
\end{proof}

\subsection{Proof of Proposition~\ref{prop:mutual_alignment}}
\label{app:proof2}
\begin{proof}
We first have a look on the right hand:
\begin{align}
\label{eq:two_kld}
    &\KL (p({\rvz_{x\rightarrow x}}), p(\rvz^*)) + \KL (p({\rvz_{y\rightarrow y}}), p(\rvz^*))
\end{align}
Ideally, all the variables $\rvz_{x\rightarrow x}$, $\rvz_{y\rightarrow y}$, and $\rvz^*$ follow the Gaussian distributions with $\mu_{x\rightarrow x}$, $\mu_{y\rightarrow y}$, $\mu^*$ and $\sigma_{x\rightarrow x}$, $\sigma_{y\rightarrow y}$, $\sigma^*$ as mean and variance, respectively. 

Luckily, we can use the following equation to calculate the KL divergence between two Gaussian distributions conveniently:
\begin{align}
    \KL (p(\rvz_1), p(\rvz_2)) = \log \frac{\sigma_2}{\sigma_1} + \frac{\sigma_1^2+(\mu_1-\mu_2)^2}{2\sigma^2_2} - \frac{1}{2},
\end{align}
and rewrite Equation(\ref{eq:two_kld}) as:
\begin{align}
    &\KL (p({\rvz_{x\rightarrow x}}), p(\rvz^*)) + \KL (p({\rvz_{y\rightarrow y}}), p(\rvz^*)) \\
    = &(\log \frac{\sigma^*}{\sigma_{x\rightarrow x}} + \frac{\sigma_{x\rightarrow x}^2+(\mu_{x\rightarrow x}-\mu^*)^2}{2(\sigma^*)^2} - \frac{1}{2}) + 
    (\log \frac{\sigma^*}{\sigma_{y\rightarrow y}} + \frac{\sigma_{y\rightarrow y}^2+(\mu_{y\rightarrow y}-\mu^*)^2}{2(\sigma^*)^2} - \frac{1}{2})\\
    = &(\log \frac{\sigma^*}{\sigma_{x\rightarrow x}} +  \log \frac{\sigma^*}{\sigma_{y\rightarrow y}})
    + (\frac{\sigma_{x\rightarrow x}^2+(\mu_{x\rightarrow x}-\mu^*)^2}{2(\sigma^*)^2} + \frac{\sigma_{y\rightarrow y}^2+(\mu_{y\rightarrow y}-\mu^*)^2}{2(\sigma^*)^2}) 
    -1\\
    = &(\log \frac{\sigma^*}{\sigma_{x\rightarrow x}} +  \log \frac{\sigma^*}{\sigma_{y\rightarrow y}}) 
    +\frac{\sigma_{x\rightarrow x}^2+(\mu_{x\rightarrow x}-\mu^*)^2+\sigma_{y\rightarrow y}^2+(\mu_{y\rightarrow y}-\mu^*)^2}{2(\sigma^*)^2}
    -1
\end{align}
Take $\rvz^* \sim \gN(\mu^*=\mathbf{0},\sigma^*=\rmI)$ into the above equation, we will have:
\begin{align}
    &\KL (p({\rvz_{x\rightarrow x}}), p(\rvz^*)) + \KL (p({\rvz_{y\rightarrow y}}), p(\rvz^*)) \\
    =&-\log \sigma_{x\rightarrow x} \sigma_{y\rightarrow y} + \frac{1}{2}( \sigma_{x\rightarrow x}^2 + \sigma_{y\rightarrow y}^2 + \mu_{x\rightarrow x}^2 + \mu_{y\rightarrow y}^2) -1  
\end{align}
Similarly, the left hand can be expanded as:
\begin{align}
    \KL (p(\rvz_{x\rightarrow x}), p(\rvz_{y\rightarrow y})) = \log \frac{\sigma_{y\rightarrow y}}{\sigma_{x\rightarrow x}} + \frac{\sigma_{x\rightarrow x}^2+(\mu_{x\rightarrow x}-\mu_{y\rightarrow y})^2}{2\sigma^2_{y\rightarrow y}} - \frac{1}{2},
\end{align}

Thus, the difference between the \rev{left} hand and the right hand can be computed:
\begin{align}
    &\KL (p({\rvz_{x\rightarrow x}}), p(\rvz^*)) + \KL (p({\rvz_{y\rightarrow y}}), p(\rvz^*)) - \KL (p(\rvz_{x\rightarrow x}), p(\rvz_{y\rightarrow y}))\\
    =&-\log \sigma_{x\rightarrow x} \sigma_{y\rightarrow y} + \frac{1}{2}(\sigma_{x\rightarrow x}^2 + \sigma_{y\rightarrow y}^2 + \mu_{x\rightarrow x}^2 + \mu_{y\rightarrow y}^2) -1\\
    &- (\log \frac{\sigma_{y\rightarrow y}}{\sigma_{x\rightarrow x}} + \frac{\sigma_{x\rightarrow x}^2+(\mu_{x\rightarrow x}-\mu_{y\rightarrow y})^2}{2\sigma^2_{y\rightarrow y}} - \frac{1}{2})\\
    =& (-\log \sigma_{x\rightarrow x} \sigma_{y\rightarrow y} - \log \frac{\sigma_{y\rightarrow y}}{\sigma_{x\rightarrow x}})\\
    &+ (\frac{1}{2}(\sigma_{x\rightarrow x}^2 + \sigma_{y\rightarrow y}^2 + \mu_{x\rightarrow x}^2 + \mu_{y\rightarrow y}^2) - \frac{\sigma_{x\rightarrow x}^2+(\mu_{x\rightarrow x}-\mu_{y\rightarrow y})^2}{2\sigma^2_{y\rightarrow y}})
    + ( -1 + \frac{1}{2})\\
    =& -2\log \sigma_{y\rightarrow y} - \frac{1}{2}\\
    &+\frac{\sigma_{x\rightarrow x}^2 \sigma_{y\rightarrow y}^2 + \sigma_{y\rightarrow y}^4 + \mu_{x\rightarrow x}^2 \sigma_{y\rightarrow y}^2 + \mu_{y\rightarrow y}^2 \sigma_{y\rightarrow y}^2 - \sigma_{x\rightarrow x}^2 - \mu_{x\rightarrow x}^2 - \mu_{y\rightarrow y}^2 + 2\mu_{x\rightarrow x}\mu_{y\rightarrow y}                                    }{2\sigma^2_{y\rightarrow y}} \\
    =& -2\log \sigma_{y\rightarrow y} - \frac{1}{2}\\
    &+\frac{(\sigma_{y\rightarrow y}^2-1)\sigma_{x\rightarrow x}^2 + (\mu_{x\rightarrow x}^2 + \mu_{y\rightarrow y}^2) (\sigma_{y\rightarrow y}^2-1) +    \sigma_{y\rightarrow y}^4 + 2\mu_{x\rightarrow x}\mu_{y\rightarrow y}          }{2\sigma^2_{y\rightarrow y}} \\
    =& -2\log \sigma_{y\rightarrow y} - \frac{1}{2} + \frac{(\mu_{x\rightarrow x}^2 + \mu_{y\rightarrow y}^2 + \sigma_{x\rightarrow x}^2) (\sigma_{y\rightarrow y}^2-1) +    \sigma_{y\rightarrow y}^4 + 2\mu_{x\rightarrow x}\mu_{y\rightarrow y}          }{2\sigma^2_{y\rightarrow y}}
\end{align}
As we optimize $\rvz_{y\rightarrow y} \rightarrow \rvz^*$, i.e., minimize $\KL (p({\rvz_{y\rightarrow y}}), p(\rvz^*))$, we will have:
\begin{align}
    \log \sigma_{y\rightarrow y} \rightarrow 0, \quad \sigma_{y\rightarrow y}^2-1 \rightarrow 0, \quad \mu_{x\rightarrow x}\mu_{y\rightarrow y} \rightarrow 0, \quad \sigma_{y\rightarrow y}^4 \rightarrow 1,
\end{align}
and consequently:
\begin{align}
    \KL (p({\rvz_{x\rightarrow x}}), p(\rvz^*)) + \KL (p({\rvz_{y\rightarrow y}}), p(\rvz^*)) - \KL (p(\rvz_{x\rightarrow x}), p(\rvz_{y\rightarrow y})) \rightarrow 0,
\end{align}
Similarly, as we optimize $\rvz_{x\rightarrow x} \rightarrow \rvz^*$, i.e., minimize $\KL (p({\rvz_{x\rightarrow x}}), p(\rvz^*))$, we will have:
\begin{align}
    \KL (p({\rvz_{x\rightarrow x}}), p(\rvz^*)) + \KL (p({\rvz_{y\rightarrow y}}), p(\rvz^*)) - \KL (p(\rvz_{y\rightarrow y}), p(\rvz_{x\rightarrow x})) \rightarrow 0
\end{align}
Therefore, jointly minimizing $\KL (p({\rvz_{x\rightarrow x}}), p(\rvz^*))$ and $\KL (p({\rvz_{y\rightarrow y}}), p(\rvz^*))$ will subsequently minimizing $\KL (p(\rvz_{x\rightarrow x}), p(\rvz_{y\rightarrow y}))$ and $\KL (p(\rvz_{y\rightarrow y}), p(\rvz_{x\rightarrow x}))$, and finally aligning the distributions between $\rvx$ and $\rvy$, concluding the proof.
\end{proof}
\section{Implementation Details}
\label{app:details}
\subsection{Decoding Embeddings back to Concrete Features}
All decoders used to decode the reconstructed embeddings to the concrete features comprise several hidden layers and an output layer. Specifically, each hidden layer has a linear layer with layer norm and ReLU/Tanh activations. The output layer is different for different modalities. For the relational graph and attribute information, their concrete features are organized in the form of multi-classification labels. For example, the relational graph information $g_i$ for an entity $x_i$ is represented by:
\begin{align}
    g_i = (0, ..., 1, ... 1, ..., 0)^T,\quad |g_i| = |\gX|,
\end{align}
where $g_i$ has $|\gX|$ elements with $1$ indicating the connection and $0$ otherwise. Therefore, the output layer transforms the hidden output to the concrete feature prediction with a matrix $W_o \in \R^{H\times |\gX|}$, where $H$ is the output dimension of the final hidden layer. 

The image concrete features are actually the pretrained embeddings rather than pixel data, as we use the existing EEA models for embedding entities. Therefore, we replaced the binary cross-entropy loss with a MSE loss to train GEEA to recover this pretrained embedding.

\subsection{Implementing a GEEA}
We implement GEEA with PyTorch and run the main experiments on a RTX 4090. We illustrate the training procedure of GEEA as outlined in Algorithm~\ref{alg:geea}. We first initialize all trainable variables and the get the mini-batch data of supervised flows $x \rightarrow y$, $y \rightarrow x$ and unsupervised flows $x \rightarrow x$, $y \rightarrow y$, respectively. 

For the supervised flows, we iterate the batched data and calculate the prediction matching loss which is also used in most existing works. Then, we calculate the distribution matching, prior reconstruction and post reconstruction losses and sum them for later joint optimization. 

For the unsupervised flows, we first process the raw feature with $\gM$ and $\textit{VAE}$ to obtain the embeddings and reconstructed embeddings. Then we estimate the distribution matching loss with the embedding sets as input (Equation~(\ref{eq:distribution_match_loss})), after which we calculate the prior and post reconstruction loss for each $x$ and each $y$. 

Finally, we sum all the losses produced with all flows, and minimize them until the performance on the valid dataset does not improve.

The overall hyper-parameter settings in the main experiments are presented in Table~\ref{tab:hyperparameter}.

\begin{algorithm}[t]
	\caption{Generative Embedding-based Entity Alignment}
	\label{alg:geea}
    \begin{algorithmic}[1]
	\STATE {\bfseries Input:} The entity sets $\gX$, $\gY$, the multi-modal information $\gG$, $\gA$, $\gI$ ..., the EEA model $\gM$, and M-VAE $\textit{VAE}$;
	\STATE Randomly initialize all parameters;
	\REPEAT
		\STATE $\gB_\text{sup} \leftarrow \{(x, y)|(x, y) \sim \gS\}$; \quad   \textit{// get a batch of supervised training data}
            \STATE $\gB_\text{unsup} \leftarrow \{(x, y)|x \sim \gX, y \sim \gY\}$; \quad   \textit{// get a batch of unsupervised training data}
        \FOR{$(x, y) \in \gB_\text{sup}$}
            \STATE $\rvx, \rvy \leftarrow \gM(x), \gM(y)$; \quad   \textit{// obtain embeddings and sub-embeddings}
            \STATE $\rvy_{x\rightarrow y}, \rvx_{y\rightarrow x} \leftarrow \textit{VAE}(\rvx), \textit{VAE}(\rvy)$; \quad   \textit{// obtain the reconstructed mutual embeddings}
            \STATE Calculate the prediction matching loss following Equation~(\ref{eq:negative_sampling});
            \STATE Calculate the prior reconstruction loss following Equation~(\ref{eq:prior_reconstruction_loss});
            \STATE Calculate the post reconstruction loss following Equation~(\ref{eq:post_reconstruction_loss});
        \ENDFOR
        \STATE $\{ \rvx_{x\rightarrow x}, \rvy_{y\rightarrow y} | (x,y) \in \gB_\text{unsup} \} \leftarrow  \{\textit{VAE}(\gM(x)), \textit{VAE}(\gM(y))| (x,y) \in \gB_\text{unsup}\}$; \textit{// obtain the reconstructed self embeddings}
        \STATE Calculate the distribution matching loss following Equation~(\ref{eq:distribution_match_loss});
        \STATE Calculate the prior reconstruction loss following Equation~(\ref{eq:prior_reconstruction_loss});
        \STATE Calculate the post reconstruction loss following Equation~(\ref{eq:post_reconstruction_loss});
        \STATE Jointly minimize all losses;
    \UNTIL{the performance does not improve.}
	\end{algorithmic}
\end{algorithm}

\begin{table}[t]
  \centering
  \small
  \caption{Hyper-parameter settings in the main experiments. PM,DM, PrioR, PostR denote prediction matching, distribution matching, prior reconstruction, and post reconstruction, respectively.}
  \resizebox{\textwidth}{!}{
    \begin{tabular}{lcccccccccccccc}
    \toprule
    Datasets & \# epoch & batch-size &  \# VAE layers &  \makecell{learning \\ rate} & optimizer  & \makecell{dropout\\ rate} & \makecell{unsupervised\\ batch-size} & \makecell{flow \\ weights\\ (xx,yy,xy,yx)} & \makecell{loss \\ weights \\\rev{(PM,DM, PrioR, PostR)}} &  \makecell{hidden\\sizes} & \makecell{latent \\ size} & \makecell{decoder\\ hidden\\ sizes}\\
    \midrule
    DBP15K$_\text{ZH-EN}$ & 200 & 2,500  & 2 & 0.001 & Adam & 0.5 & 2,800 & [1.,1.,5.,5.] & [1., 0.5,1.,1.] & [300,300] & 300 & [300,1000]\\
    DBP15K$_\text{JA-EN}$ & 200& 2,500 & 2 & 0.001  & Adam & 0.5  & 2,800 & [1.,1.,5.,5.] & [1., 0.5,1.,1.] & [300,300] & 300 & [300,1000]\\
    DBP15K$_\text{FR-EN}$ & 200 & 2,500 & 2 & 0.001  & Adam & 0.5  & 2,800 & [1.,1.,5.,5.] & [1., 0.5,1.,1.] & [300,300] & 300 & [300,1000]\\
    FB15K-DB15K & 300 & 3,500 & 3 & 0.0005 & Adam & 0.5 & 2,500 & [1.,1.,5.,5.] & [1., 0.5,1.,1.] & [300,300,300] & 300 & [300,300,1000]\\
    FB15K-YAGO15K & 300 & 3,500 & 3 & 0.0005 & Adam & 0.5 & 2,500 & [1.,1.,5.,5.] & [1., 0.5,1.,1.] & [300,300,300] & 300 & [300,300,1000]\\
    \bottomrule
     \end{tabular}}
  \label{tab:hyperparameter}%
\end{table}%

\section{Additional Experiments}
\label{app:exprs}

\begin{table}[!t]
	\centering
	\caption{Statistics of the datasets.}
	
	\label{tab:ea_dataset}
	\resizebox{\linewidth}{!}{\scriptsize
		\begin{tabular}{lrrrrrrr}
			\toprule 
			\multirow{2}{*}{Datasets}  & Entity Alignment & \multicolumn{2}{c}{Entity Synthesis} & \multirow{2}{*}{\# Entities} & \multirow{2}{*}{\# Relations} & \multirow{2}{*}{\# Attributes} & \multirow{2}{*}{\# Images }  \\
			\cmidrule(lr){2-2}\cmidrule(lr){3-4}  &\# Test Alignments & \# Known Test Alignments & \# Unknown Test Alignments\\ 
			\midrule
			\multirow{2}{*}{DBP15K$_\text{ZH-EN}$} & 10,500 & 7,350 & 3,150 & 19,388 & 1,701 & 8,111 & 15,912 \\
			& 10,500 & 7,350 & 3,150 & 19,572 & 1,323 & 7,173 & 14,125 \\
			\midrule
			
			\multirow{2}{*}{DBP15K$_\text{JA-EN}$} & 10,500 & 7,350 & 3,150 & 19,814 & 1,299 & 5,882 & 12,739 \\
			& 10,500 & 7,350 & 3,150 & 19,780 & 1,153 & 6,066 & 13,741 \\
			\midrule
			
			\multirow{2}{*}{DBP15K$_\text{FR-EN}$} & 10,500 & 7,350 & 3,150 & 19,661 & 903 & 4,547 & 14,174 \\
			& 10,500 & 7,350 & 3,150 & 19,993 & 1,208 & 6,422 & 13,858 \\
                \midrule
                \multirow{2}{*}{FB15K-DB15K} & 10,276 & 7,193 & 3,083 & 14,951 & 1,345 & 116 & 13,444 \\
			& 10,500 & 7,350 & 3,150 & 12,842 & 279 & 225 & 12,837 \\
                \midrule
                \multirow{2}{*}{FB15K-YAGO15K} & 8,959 & 6,272 & 2,687 & 14,951 & 1,345 & 116 & 13,444 \\
			& 10,500 & 7,350 & 3,150 & 15,404 & 32 & 7 & 11,194 \\
			\bottomrule
	\end{tabular}}
\end{table}

\begin{table}[t]
	\centering
	\caption{Entity alignment results of GEEA with different EEA models on DBP15K datasets.}
	\resizebox{0.88\textwidth}{!}{
        \small
		\begin{tabular}{lcccccccccc}
			\toprule
			\multirow{2}{*}{Models} & \multicolumn{3}{c}{DBP15K$_\text{ZH-EN}$} & \multicolumn{3}{c}{DBP15K$_\text{JA-EN}$} & \multicolumn{3}{c}{DBP15K$_\text{FR-EN}$} \\
			\cmidrule(lr){2-4} \cmidrule(lr){5-7} \cmidrule(lr){8-10} 
			& Hits@1$\uparrow$ & Hits@10$\uparrow$ & MRR$\uparrow$ & Hits@1$\uparrow$ & Hits@10$\uparrow$ & MRR$\uparrow$ & Hits@1$\uparrow$ & Hits@10$\uparrow$ & MRR$\uparrow$  \\ \midrule
			EVA~\citep{EVA} & {.680} & {.910} & {.762} & {.673} & {.908} & {.757} & {.683} & {.923} & {.767} \\
                GEEA w/ EVA & \textbf{.715} & \textbf{.922} & \textbf{.794} & \textbf{.707} & \textbf{.925} & \textbf{.791} & \textbf{.727} & \textbf{.940} & \textbf{.817} \\
               \midrule
			MSNEA~\citep{MSNEA}  & .601 & .830 & .684 & .535 & .775 & .617 & .543 & .801 & .630 \\
                GEEA w/ MSNEA & \textbf{.643} & \textbf{.872} & \textbf{.732} & \textbf{.559} & \textbf{.821} & \textbf{.671} & \textbf{.586} & \textbf{.853} & \textbf{.672} \\
               \midrule
			MCLEA~\citep{MCLEA} & {.715} & {.923} & .788 & {.715} & {.909} & {.785} & {.711} & {.909} & {.782} \\
                GEEA w/ MCLEA & \textbf{.761} & \textbf{.946} & \textbf{.827} & \textbf{.755} & \textbf{.953} & \textbf{.827} & \textbf{.776} & \textbf{.962} & \textbf{.844} \\
			\bottomrule
	\end{tabular}}
	\label{tab:eea_models}
 \vspace{-.8em}
\end{table}

\begin{table*}[!t]
\small
\centering
\caption{More entity synthesis samples from different dataset. The first two rows are from FB15K-YAGO15K; the middle two rows are from DBP15K$_\text{ZH-EN}$; the last two rows are from DBP15K$_\text{FR-EN}$.}
\label{tab:more_samples}
\resizebox{\linewidth}{!}{\scriptsize
\begin{tabular}{cc|ccc|ccc}
\toprule
\multicolumn{2}{c}{Source} & \multicolumn{3}{c}{Target} & \multicolumn{3}{c}{GEEA Output}\\ \cmidrule(lr){1-2} \cmidrule(lr){3-5} \cmidrule(lr){6-8}
Entity & Image & Image & Neighborhood & Attribute & Image & Neighborhood & Attribute \\ 
\midrule
\makecell{James \\Cameron\\ (director)} & \raisebox{-.5\totalheight}{\includegraphics[width=0.07\textwidth]{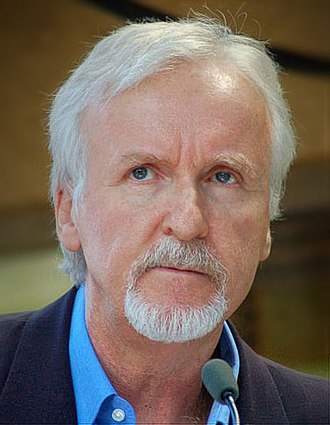}} & \raisebox{-.5\totalheight}{\includegraphics[width=0.07\textwidth]{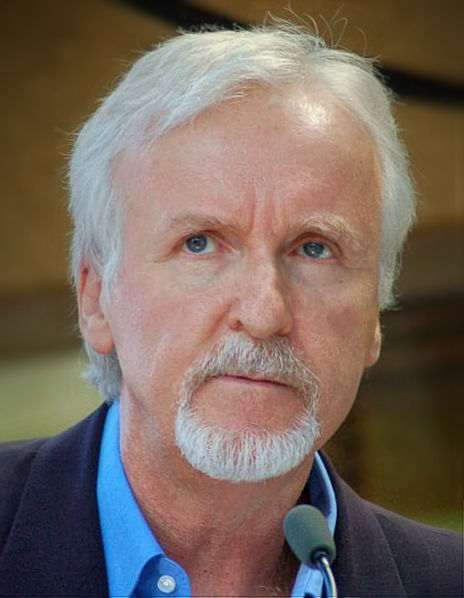}} & \makecell{United States, New Zealand, \\ Kathryn Bigelow, Avatar (2009 film),\\ The Terminator} & \makecell{wasBornOnDate} & \raisebox{-.5\totalheight}{\includegraphics[width=0.07\textwidth]{imgs/cameron_yago.jpg}} & \makecell{\textbf{Kathryn Bigelow}, \\ \textbf{United States}, \textbf{The Terminator}, \\ Jonathan Frakes, \underline{James Cameron}} & \makecell{\textbf{wasBornOnDate}, \\\underline{diedOnDate},\\ diedOnDate}\\
\midrule
\makecell{Northwest \\Territories\\ (Canada)} & \raisebox{-.5\totalheight}{\includegraphics[width=0.07\textwidth]{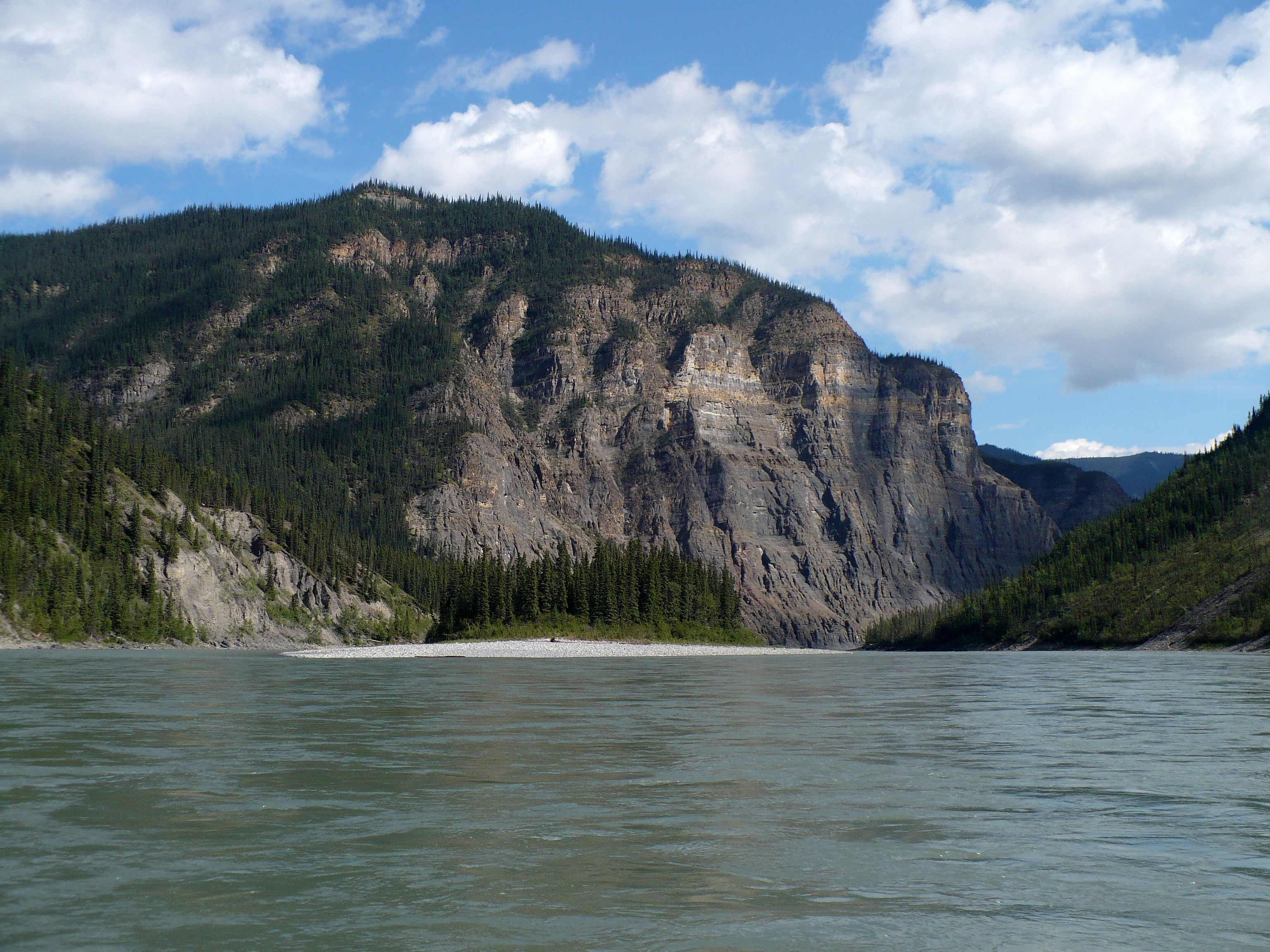}} & \raisebox{-.5\totalheight}{\includegraphics[width=0.07\textwidth]{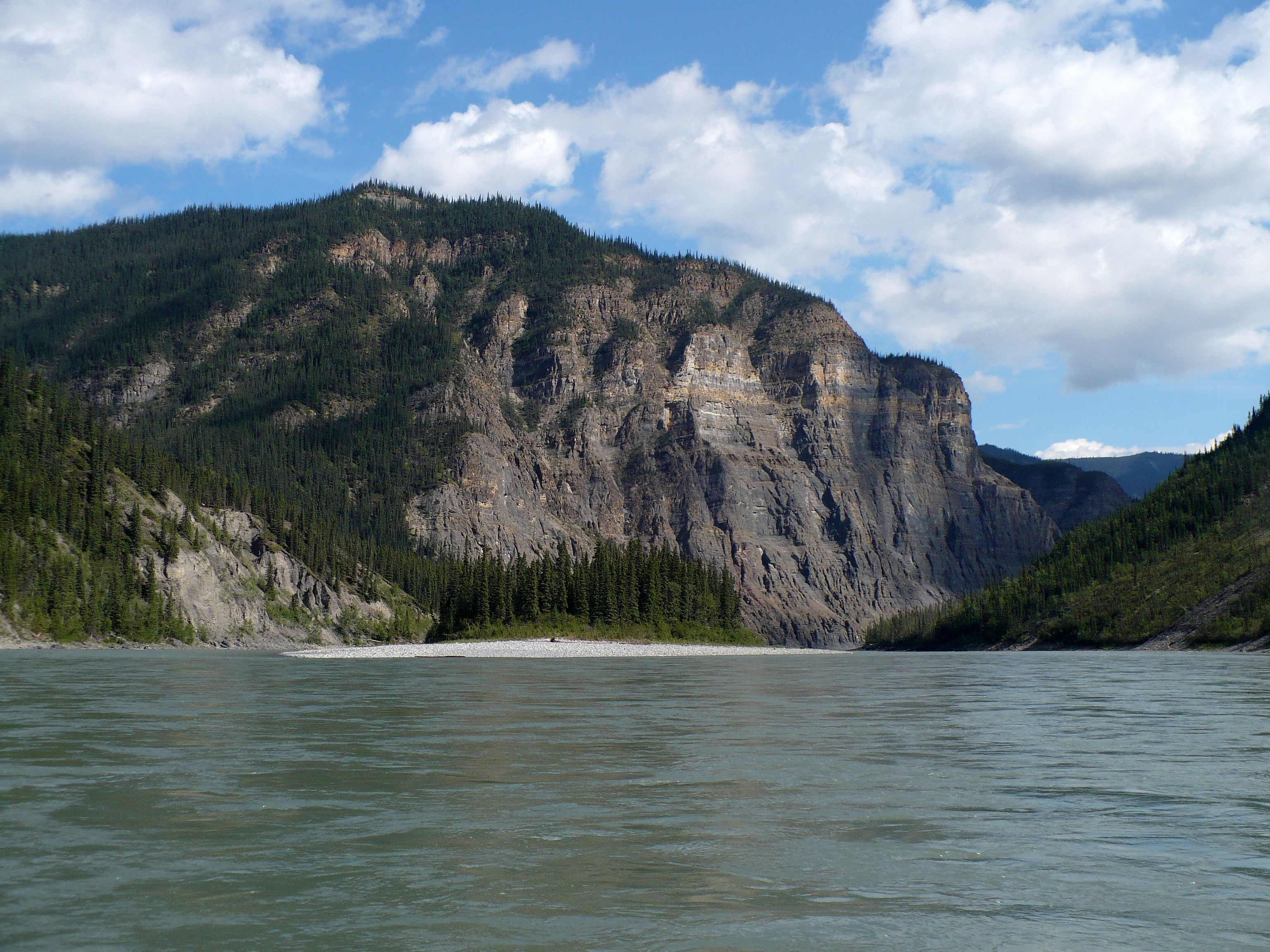}} & \makecell{Canada,\\ English language, \\ French language} & \makecell{wasCreatedOnDate,\\ hasLatitude,\\ hasLongitude} & \raisebox{-.5\totalheight}{\includegraphics[width=0.07\textwidth]{imgs/northwest_yago.jpg}} & \makecell{\textbf{English language},  \\ \underline{Yukon}, \underline{Nunavut},\\\textbf{French language},\\ \textbf{Canada}, \underline{Prince Edward Island}} & \makecell{\textbf{wasCreatedOnDate},\\ \textbf{hasLatitude}, \textbf{hasLongitude}, \\ wasDestroyedOnDate}\\
\midrule
\makecell{\begin{CJK*}{UTF8}{gkai}苏州市\end{CJK*}\\(Suzhou)} & \raisebox{-.5\totalheight}{\includegraphics[width=0.07\textwidth]{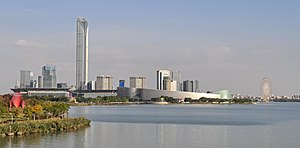}} & \raisebox{-.5\totalheight}{\includegraphics[width=0.07\textwidth]{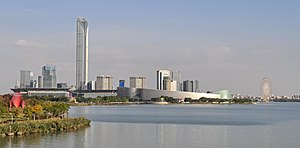}} & \makecell{Lake Tai, Suzhou dialect, \\ Jiangsu, Han Xue (actress), Wu Chinese} & \makecell{populationTotal, mapCaption,\\ location, longd\\ latd, populationUrban} & \raisebox{-.5\totalheight}{\includegraphics[width=0.07\textwidth]{imgs/suzhou_en.jpg}} & \makecell{\textbf{Lake Tai}, \textbf{Suzhou dialect}, \textbf{Jiangsu},\\ \textbf{Han Xue (actress)}, \textbf{Wu Chinese}, \\\underline{Huzhou}, \underline{Hangzhou}, \underline{Jiangyin}} & \makecell{\textbf{mapCaption}, \textbf{populationTotal}, \textbf{location}, \\ \textbf{longd}, \textbf{latd}, \textbf{populationUrban},\\ \underline{populationDensityKm}, \underline{areaTotalKm}, \underline{postalCode}}\\
\midrule
\makecell{\begin{CJK*}{UTF8}{gkai}周杰伦\end{CJK*}\\(Jay Chou)} & \raisebox{-.5\totalheight}{\includegraphics[width=0.07\textwidth]{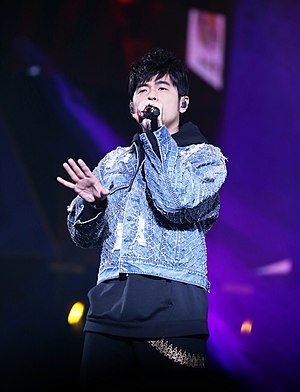}} & \raisebox{-.5\totalheight}{\includegraphics[width=0.07\textwidth]{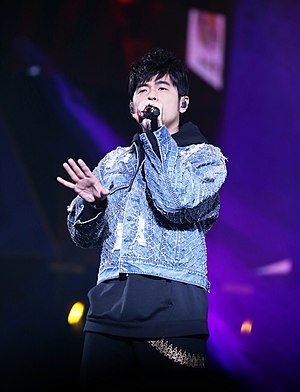}} & \makecell{Fantasy (Jay Chou album), Jay (album), \\ The Era (album), Capricorn (Jay Chou album),\\ Rock music, Pop music} & \makecell{name, birthDate,\\ occupation, yearsactive\\ birthPlace, awards} & 
\raisebox{-.5\totalheight}{\includegraphics[width=0.07\textwidth]{imgs/jay_en.jpg}} & \makecell{\textbf{The Era (album)}, \textbf{Capricorn (Jay Chou album)}, \\\textbf{Jay (album)}, \textbf{Fantasy (Jay Chou album)}, \textbf{Pop music}, \\\underline{Ye Hui Mei}, \underline{Perfection (EP)},\\ \underline{Sony Music Entertainment}} & \makecell{\textbf{name}, \textbf{occupation}, \textbf{birthDate}, \\ \textbf{yearsactive}, \textbf{birthPlace}, \textbf{awards},\\ \underline{pinyinchinesename}, \underline{spouse}, \underline{children}}\\
\midrule
\makecell{Forel (Lavaux)} & \raisebox{-.5\totalheight}{\includegraphics[width=0.07\textwidth]{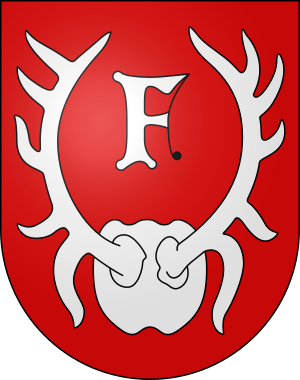}} & \raisebox{-.5\totalheight}{\includegraphics[width=0.07\textwidth]{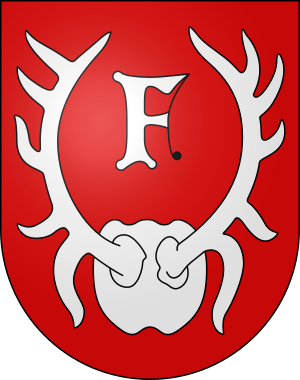}} & \makecell{Puidoux, Essertes, \\ Servion, Savigny, Switzerland} & \makecell{name, population,\\ canton, population\\ postalCode, languages} & \raisebox{-.5\totalheight}{\includegraphics[width=0.07\textwidth]{imgs/forel_en.png}} & \makecell{\textbf{Servion}, \textbf{Puidoux}, \textbf{Essertes},\\ \textbf{Savigny, Switzerland}, \underline{Montpreveyres}, \\\underline{Corcelles-le-Jorat}, \underline{Pully}, \underline{Ropraz}} & \makecell{\textbf{name}, \textbf{canton}, \textbf{population}, \\ \textbf{languages}, \underline{légende}, \underline{latitude},\\ \underline{longitude}, \underline{blason}, \underline{gentilé}}\\
\midrule
\makecell{Nintendo 3DS} & \raisebox{-.5\totalheight}{\includegraphics[width=0.07\textwidth]{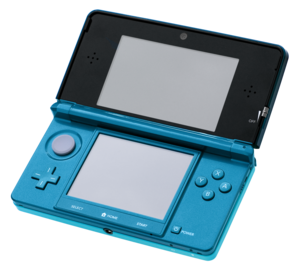}} & \raisebox{-.5\totalheight}{\includegraphics[width=0.07\textwidth]{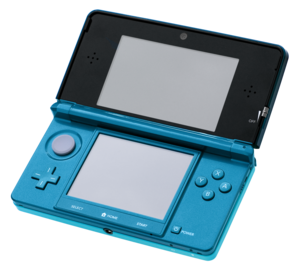}} & \makecell{Nintendo, Kirby (series),  \\Nintendo DS, Super Smash Bros., \\ Yo-Kai Watch, Need for Speed} & \makecell{name, title,\\ caption, logo} & 
\raisebox{-.5\totalheight}{\includegraphics[width=0.07\textwidth]{imgs/3ds_en.png}} & \makecell{\textbf{Nintendo}, \textbf{Kirby (series)}, \textbf{Nintendo DS},\\ \textbf{Super Smash Bros.}, \textbf{Yo-Kai Watch}, \\\underline{Shigeru Miyamoto}, \underline{Game Boy}, \underline{Nintendo 64}} & \makecell{\textbf{name}, \textbf{title} \textbf{caption},\\  \textbf{date},  \underline{type}, \underline{trad}, \\\underline{width}, \underline{année}, \underline{période}}\\
\bottomrule
\end{tabular}}
\vspace{-.5em}
\end{table*}

\begin{table*}[!t]
\small
\centering
\caption{Some false samples from FB15K-YAGO15K.}
\label{tab:false_samples}
\resizebox{\linewidth}{!}{\scriptsize
\begin{tabular}{cc|ccc|ccc}
\toprule
\multicolumn{2}{c}{Source} & \multicolumn{3}{c}{Target} & \multicolumn{3}{c}{GEEA Output}\\ \cmidrule(lr){1-2} \cmidrule(lr){3-5} \cmidrule(lr){6-8}
Entity & Image & Image & Neighborhood & Attribute & Image & Neighborhood & Attribute \\ 
\midrule
\makecell{The \\Matrix\\ (film)} & \raisebox{-.5\totalheight}{\includegraphics[width=0.07\textwidth]{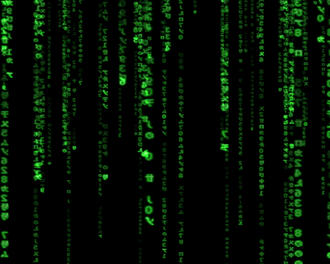}} & \raisebox{-.5\totalheight}{\includegraphics[width=0.07\textwidth]{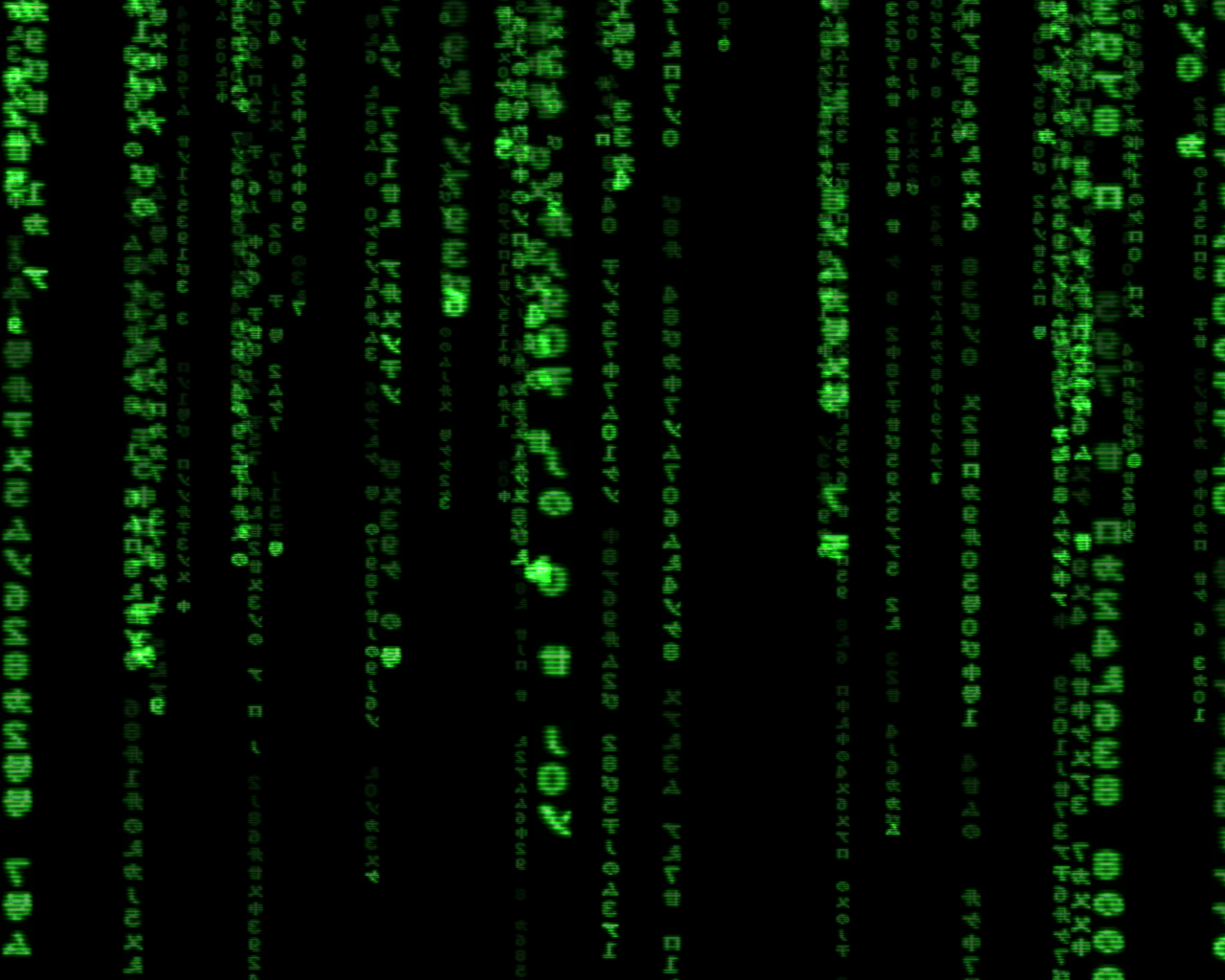}} & \makecell{Carrie-Anne Moss,\\ Keanu Reeves, \\ Laurence Fishburne\\ Hugo Weaving} & \makecell{wasCreatedOnDate} & 
\raisebox{-.5\totalheight}{\includegraphics[width=0.07\textwidth]{imgs/matrix_yago.png}} & \makecell{\underline{District 9},  \underline{What Lies Beneath}, \underline{Avatar (2009 film)}, \\Ibad Muhamadu,  The Fugitive (1993 film),\\ Denny Landzaat, Michael Lamey, Dries Boussatta} & \makecell{wasBornOnDate, \\\textbf{wasCreatedOnDate},\\ diedOnDate}\\
\midrule
\makecell{The \\Terminator\\ (film)} & \raisebox{-.5\totalheight}{\includegraphics[width=0.07\textwidth]{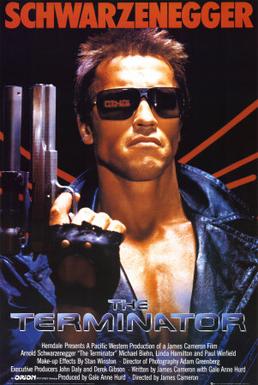}} & \raisebox{-.5\totalheight}{\includegraphics[width=0.07\textwidth]{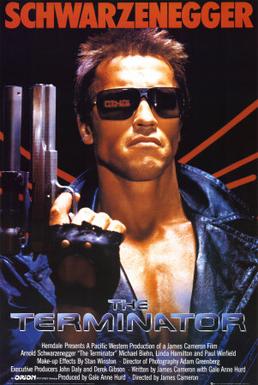}} & \makecell{United States,\\ Michael Biehn, \\ James Cameron\\ Arnold Schwarzenegger} & \makecell{wasCreatedOnDate} & \raisebox{-.5\totalheight}{\includegraphics[width=0.07\textwidth]{imgs/terminator_yago.jpg}} & \makecell{\textbf{United States},  Nicaraguan Revolution, \\ Ibad Muhamadu, José Rodrigues Neto, \\ \underline{Anaconda (film)}} & \makecell{\textbf{wasCreatedOnDate},\\ diedOnDate, wasCreatedOnDate,\\ wasDestroyedOnDate, }\\
\bottomrule
\end{tabular}}
\vspace{-.5em}
\end{table*}

\subsection{Datasets}
We present the statistics of entity alignment and entity synthesis datasets in Table~\ref{tab:ea_dataset}. To construct an entity synthesis dataset, we first sample 30\% of entity alignments from the testing set of the original entity alignment dataset. Then, we view the source entities in sampled entities pairs as the dangling entities, and make their target entities unseen during training. To this end, we remove all types of information referred to these target entities from the training set.

\rev{
\subsection{Single-modal GEEA}}

We first remove the image encoder from multi-modal EEA models. The results are shown in Table~\ref{tab:detailed_ea_results}. Notably, our GEEA without the image encoder still achieves state-of-the-art performance on several metrics, such as Hits@10.

Then, we conducted new experiments on the OpenEA 100K~\citep{OpenEA}. Although OpenEA 100K does not have a multi-modal version, it is still interesting to explore the performance of GEEA with single-modal EEA models on it, similar to NeoEA~\citep{NeoEA}. We conducted experiments following the NeoEA and present the results in Table~\ref{tab:ea100k_results}. It is clear that our method can significantly enhance the performance of SEA~\citep{SEA}, which can be attributed to the more stringent objectives analyzed in Section~\ref{sec:background}.

\begin{table}[t]
	\centering
	\caption{Detailed entity alignment results on DBP15K datasets, without surface information and iterative strategy.}
	\resizebox{\textwidth}{!}{
        \small
		\begin{tabular}{lcccccccccc}
			\toprule
			\multirow{2}{*}{Models} & \multicolumn{3}{c}{DBP15K$_\text{ZH-EN}$} & \multicolumn{3}{c}{DBP15K$_\text{JA-EN}$} & \multicolumn{3}{c}{DBP15K$_\text{FR-EN}$} \\
			\cmidrule(lr){2-4} \cmidrule(lr){5-7} \cmidrule(lr){8-10} 
			& Hits@1$\uparrow$ & Hits@10$\uparrow$ & MRR$\uparrow$ & Hits@1$\uparrow$ & Hits@10$\uparrow$ & MRR$\uparrow$ & Hits@1$\uparrow$ & Hits@10$\uparrow$ & MRR$\uparrow$  \\ \midrule
			EVA~\citep{EVA} & {.680} & {.910} & {.762} & {.673} & {.908} & {.757} & {.683} & \underline{.923} & {.767} \\
			MSNEA~\citep{MSNEA}  & .601 & .830 & .684 & .535 & .775 & .617 & .543 & .801 & .630 \\
   \midrule
			MCLEA~\citep{MCLEA} & \underline{.715} & .923 & \underline{.788} & \underline{.715} & .909 & \underline{.785} & .711 & .909 & .782 \\
                MCLEA w/o image & .658 & .915 & .726 & .662 & .904 & .740 & .662 & .902 & .747 \\
   \midrule
   GEEA & \textbf{.761} & \textbf{.946} & \textbf{.827} & \textbf{.755} & \textbf{.953} & \textbf{.827} & \textbf{.776} & \textbf{.962} & \textbf{.844} \\
   GEEA w/o image & {.709} & \underline{.929} & {.782} & {.708} & \underline{.935} & {.784} & \underline{.717} & \underline{.946} & \underline{.796} \\
			\bottomrule
	\end{tabular}}
	\label{tab:detailed_ea_results}
 \vspace{-.8em}
\end{table}

\begin{table}[t]
	\centering
	\caption{Single-modal entity alignment results on OpenEA 100K datasets.}
	\resizebox{.9\textwidth}{!}{
        \small
		\begin{tabular}{lcccccccccc}
			\toprule
			\multirow{2}{*}{Models} & \multicolumn{2}{c}{EN-FR} & \multicolumn{2}{c}{EN-DE} & \multicolumn{2}{c}{DBPedia-WikiData} & \multicolumn{2}{c}{DBPedia-Yago}\\
			\cmidrule(lr){2-3} \cmidrule(lr){4-5} \cmidrule(lr){6-7} \cmidrule(lr){8-9} 
			& Hits@1$\uparrow$  & MRR$\uparrow$ & Hits@1$\uparrow$  & MRR$\uparrow$ & Hits@1$\uparrow$ & MRR$\uparrow$ & Hits@1$\uparrow$ & MRR$\uparrow$ \\ \midrule
		      SEA~\citep{SEA} & .225 & .314 & .341 & .421 & .291 & .378 & .490 & .578\\
                NeoEA (SEA)~\citep{NeoEA} & \underline{.254} & \underline{.345} & \underline{.364} & \underline{.446} &.\underline{325} & \underline{.416} & \underline{.569} & \underline{.651}\\
                GEEA (SEA) & \textbf{.269} & \textbf{.355} & \textbf{.377} & \textbf{.459} & \textbf{.349} & \textbf{.436} & \textbf{.597} & \textbf{.685}\\
			\bottomrule
	\end{tabular}}
	\label{tab:ea100k_results}
 \vspace{-.8em}
\end{table}

\rev{
\subsection{GEEA on Dangling Entity Detection}}

The methods for detecting dangling entities~\citep{danglingentity1} can be categorized into three groups: (1) Nearest Neighbor Classifier (NNC), which trains a classifier to determine whether a source entity lacks a counterpart entity in the target KG; (2) Margin-based Ranking (MR), which learns a margin value $\lambda$. If the embedding distance between a source entity and its nearest neighbor in the target KG exceeds $\lambda$, this source entity is considered a dangling entity; (3) Background Ranking (BR), which regards the dangling entities as background and randomly pushes them away from aligned entities.

All three types of dangling detection methods heavily rely on the quality of entity embeddings. Therefore, if the proposed GEEA learns better embeddings for entity alignment, it is expected to contribute to the detection of dangling entities. To verify this idea, we conducted experiments on new datasets, following [1]. We used the same parameter settings and employed MTransE~\citep{MTransE} as the backbone model. The results are presented in Table~\ref{tab:dd_results}. Clearly, incorporating GEEA led to significant performance improvements in dangling entity detection across all three datasets. The performance gains were particularly notable in terms of precision and F1 metrics.

\begin{table}[t]
	\centering
	\caption{Dangling entity detection results on DBP2.0.}
	\resizebox{.95\textwidth}{!}{
        \small
		\begin{tabular}{lcccccccccc}
			\toprule
			\multirow{2}{*}{Models} & \multicolumn{3}{c}{DBP 2.0$_\text{ZH-EN}$} & \multicolumn{3}{c}{DBP 2.0$_\text{JA-EN}$} & \multicolumn{3}{c}{DBP 2.0$_\text{FR-EN}$} \\
			\cmidrule(lr){2-4} \cmidrule(lr){5-7} \cmidrule(lr){8-10} 
			& Precision$\uparrow$ & Recall$\uparrow$ & F1$\uparrow$ & Precision$\uparrow$ & Recall$\uparrow$ & F1$\uparrow$ & Precision$\uparrow$ & Recall$\uparrow$ & F1$\uparrow$  \\ \midrule
			NNC~\citep{danglingentity1} & .604 & .485 & .538 & .622 & .491 & .549 & .459 & .447 & .453\\
                GEEA (NNC) & .617 & .509 & .558 & .637 & .460 & .534 & .479 & .449 & .464 \\
                \midrule
			MR~\citep{danglingentity1} & .781 & .702 & .740 & .799 & .708 & .751 & .482 & .575 & .524 \\
                GEEA (MR)  & .793 & .709 & .749 & .812 & .714 & .760 & .508 & .594 & .548  \\
                \midrule
                BR~\citep{danglingentity1} & \underline{.811} & \textbf{.728} & \underline{.767} & \underline{.816} & \underline{.733} & \underline{.772} & \underline{.539} & \underline{.686} & \underline{.604} \\
                GEEA (BR)  & \textbf{.821} & .724 & \textbf{.769} & \textbf{.833} & \textbf{.735} & \textbf{.781} & \textbf{.549} & \textbf{.694} & \textbf{.613}  \\
			\bottomrule
	\end{tabular}}
	\label{tab:dd_results}
 \vspace{-.8em}
\end{table}

\subsection{GEEA with different EEA models}
We also investigated the performance of GEEA with different EEA models. As shown in Table~\ref{tab:eea_models}, GEEA significantly improved all the baseline models on all metrics and datasets. Remarkably, the performance of EVA with GEEA on some datasets like DBP15K$_\text{FR-EN}$ were even better than that of the original MCLEA.

\subsection{Results with Different Alignment Ratios on All Datasets}
We present the results with different alignment ratios on all datasets in Figure~\ref{fig:ratio_all}, which demonstrate the same conclusion as in Figure~\ref{fig:ratio}.

\subsection{More Entity Synthesis Samples}
We illustrate more entity synthesis samples in Table~\ref{tab:more_samples} and some false samples in Table~\ref{tab:false_samples}. The main reason for less accurate synthesis results is the lack of information. For example, in the FB15K-YAGO15K datasets, the YAGO KG has only $7$ different attributes. Also, as some entities do not have image features, the EEA models are configured to initialize the pretrained image embeddings with random vectors. To mitigate this problem, we plan to design new benchmarks and new EEA models to directly process and generate the raw data in future work.

\begin{figure}[t]
	\centering
	\includegraphics[width=\linewidth]{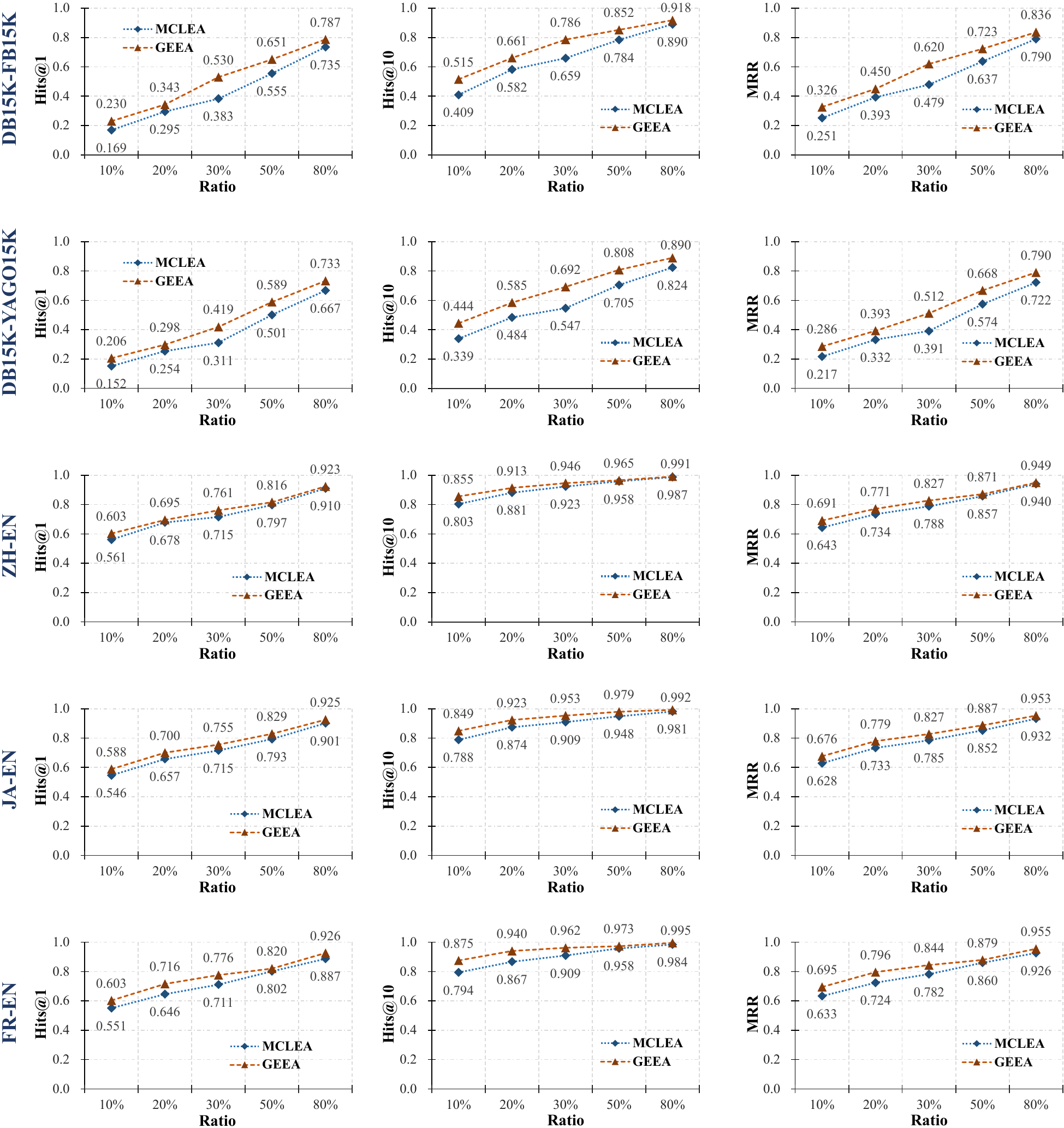}
	\caption{Entity alignment results on all datasets, w.r.t. ratios of training alignment.}
	\label{fig:ratio_all}
	\vspace{-1em}
\end{figure}

\end{document}